\documentclass[twoside]{article}

\usepackage[accepted]{aistats2026}
\usepackage[round]{natbib}

\usepackage{xcolor}         % colors
\usepackage{soul}
\definecolor{lightgrey}{RGB}{192, 192, 192}
\definecolor{lightred}{rgb}{1, 0.8, 0.8}
\definecolor{lightblue}{rgb}{0.8, 0.9, 1}
\newcommand*\colourcheck[1]{%
  \expandafter\newcommand\csname #1check\endcsname{\textcolor{#1}{\ding{52}}}%
}
\colourcheck{blue}
\colourcheck{green}
\colourcheck{red}
\usepackage{pifont}

\usepackage[most]{tcolorbox}
\usepackage{colortbl}

\usepackage{hyperref}
\usepackage{url}
\usepackage{booktabs}
\usepackage{multirow}
\usepackage{multicol}
\usepackage{makecell}
\usepackage{amssymb}

\usepackage{xcolor}

\usepackage{color}
\usepackage{colortbl}

\usepackage{amsthm}
\theoremstyle{definition}

\newtheorem{theorem}{Theorem}
\newtheorem{lemma}{Lemma}

\usepackage{fontawesome5}
\usepackage{amsmath}
\usepackage{mathtools}
\usepackage{amsthm}

\usepackage{enumitem}

\usepackage{algorithm}
\usepackage{algpseudocode}

\usepackage{titletoc}
\usepackage{lipsum}
\newcommand{\GAM}{\textsc{GAM}}
\newcommand{\GAAM}{\textsc{GA\textsuperscript{2}M}}
\newcommand{\NAE}{\textsc{NAE}}

\begin{document}

% If your paper is accepted and the title of your paper is very long,
% the style will print as headings an error message. Use the following
% command to supply a shorter title of your paper so that it can be
% used as headings.
%
%\runningtitle{I use this title instead because the last one was very long}

% If your paper is accepted and the number of authors is large, the
% style will print as headings an error message. Use the following
% command to supply a shorter version of the author names so that
% they can be used as headings (for example, use only the surnames)
%
%\runningauthor{Surname 1, Surname 2, Surname 3, ...., Surname n}

\twocolumn[

\aistatstitle{Neural Additive Experts: Context-Gated Experts for Controllable Model Additivity}

\aistatsauthor{ Guangzhi Xiong \And Sanchit Sinha \And  Aidong Zhang }

\aistatsaddress{ University of Virginia \And  University of Virginia \And University of Virginia } ]
% \aistatsaddress{ University of Virginia} ]

\begin{abstract}
The trade-off between interpretability and accuracy remains a core challenge in machine learning. Standard Generalized Additive Models (GAMs) offer clear feature attributions but are often constrained by their strictly additive nature, which can limit predictive performance. Introducing feature interactions can boost accuracy yet may obscure individual feature contributions. To address these issues, we propose Neural Additive Experts (NAEs), a novel framework that seamlessly balances interpretability and accuracy. NAEs employ a mixture of experts framework, learning multiple specialized networks per feature, while a dynamic gating mechanism integrates information across features, thereby relaxing rigid additive constraints. Furthermore, we propose targeted regularization techniques to mitigate variance among expert predictions, facilitating a smooth transition from an exclusively additive model to one that captures intricate feature interactions while maintaining clarity in feature attributions. Our theoretical analysis and experiments on synthetic data illustrate the model's flexibility, and extensive evaluations on real-world datasets confirm that NAEs achieve an optimal balance between predictive accuracy and transparent, feature-level explanations. The code is available at \url{https://github.com/Teddy-XiongGZ/NAE}.
\end{abstract}

\section{INTRODUCTION}

The tension between interpretability and predictive accuracy is a central challenge in modern machine learning. While deep neural networks and other complex models have achieved remarkable success across a range of domains, their opaque decision-making processes often hinder adoption in high-stakes settings where transparency and trust are paramount. In contrast, Generalized Additive Models (GAMs) \citep{hastie2017generalized,agarwal2021neural,lou2012intelligible,chang2022node} are widely valued for their interpretability, the degree to which an observer can understand the cause of a decision \citep{biran2017explanation,miller2019explanation}, as they decompose predictions into clear, feature-specific contributions \citep{mcintosh2025generalized,ibrahim2025predicting,zhang2024gaussian,bouchiat2023improving}. However, the strictly additive structure of GAMs can limit their ability to capture complex relationships among features, often resulting in suboptimal predictive performance.

A common approach to improve the predictive accuracy of GAMs is to incorporate feature interactions \citep{ruppert2004elements,lou2013accurate}. However, introducing such interactions often diminishes interpretability, as it becomes more difficult to transparently attribute predictions to individual features, since interpreting a single feature may require a comprehensive analysis of multiple related feature interactions \citep{lou2012intelligible,caruana2015intelligible,christoph2020interpretable}. Furthermore, GAMs with interactions typically lack explicit mechanisms to control model additivity, the degree to which outputs can be decomposed into additive feature contributions, which is a hallmark of standard GAMs. This limitation restricts users' ability to flexibly balance accuracy and interpretability to suit specific application requirements.

In this work, we propose \emph{Neural Additive Experts} (NAEs), a novel framework designed to address this fundamental trade-off. NAEs extend the classical additive paradigm by associating each feature with a set of specialized expert networks. A dynamic gating mechanism adaptively integrates information across features, allowing the model to relax the rigid additive constraints when interactions are important. To control the balance between accuracy and interpretability, we introduce a targeted regularization term that controls the variance among expert predictions, enabling a smooth transition from a purely additive model to one that captures nuanced feature interactions while maintaining clarity in feature attributions.

Our theoretical analysis demonstrates how NAEs can recover complex data-generating processes that are inaccessible to standard GAMs, and provides insight into the role of regularization in balancing model flexibility and interpretability. Through experiments on synthetic and real-world datasets, we show that NAEs achieve competitive accuracy with state-of-the-art black-box models, while offering transparent, feature-level explanations. By providing a principled mechanism to control the trade-off between interpretability and accuracy, NAEs offer a practical solution for applications where both predictive performance and transparency are essential.

The contributions of this work are summarized as follows:
\begin{itemize}
\item We introduce \textbf{Neural Additive Experts} (NAEs), a novel framework that extends additive models with a mixture-of-experts architecture and dynamic gating, enabling flexible integration of feature interactions while preserving interpretability.
\item We propose targeted regularization techniques that allow users to control the variance among expert predictions, providing a principled mechanism to balance model flexibility and transparency.
\item We provide theoretical analysis demonstrating how NAEs can recover complex data-generating processes that are inaccessible to standard GAMs, and clarify the role of regularization in managing the trade-off between accuracy and interpretability.
\item We empirically validate NAEs on both synthetic and real-world datasets, showing that our approach achieves competitive predictive performance with state-of-the-art black-box models while maintaining clear, feature-level explanations.
\end{itemize}
\section{NEURAL ADDITIVE EXPERTS}

Neural Additive Expert (NAE) is a framework that extends classical GAMs by associating each feature with a set of expert predictors and employing a dynamic, context-aware gating mechanism. This architecture enables the model to capture complex, context-dependent feature effects while preserving the additive structure that supports interpretability. Figure \ref{fig:architecture} provides an overview of the NAE architecture.

\begin{figure*}[h!]
    \centering
    \includegraphics[width=0.9\linewidth]{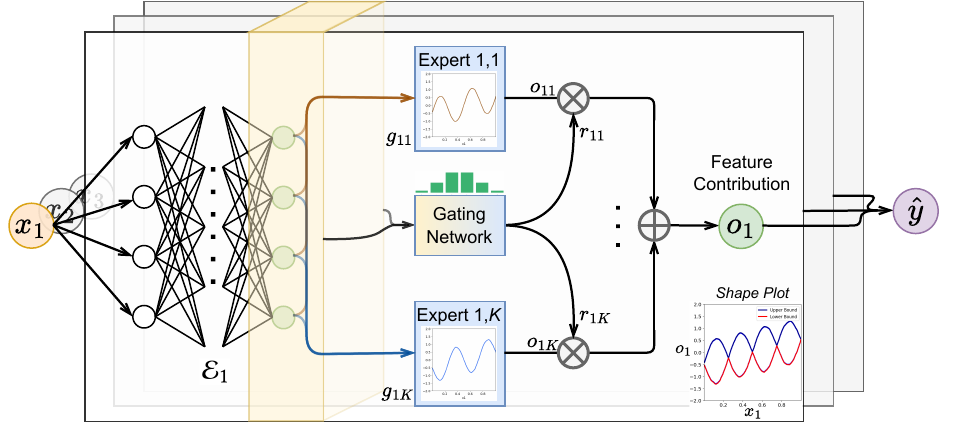}
    \caption{
    Illustration of the Neural Additive Expert (NAE) framework. 
    A gating network dynamically assigns relevance scores to multiple expert predictors for each feature, and the aggregated feature contributions are summed to produce the final prediction. This design maintains interpretability while enabling flexible modeling of complex relationships.
    }
    \label{fig:architecture}
\end{figure*}

\subsection{Feature Representation and Expert Networks}

Let $x_i$ ($i=1,\ldots,n$) denote the $i$-th feature of an input instance. Each feature is first mapped into a latent representation via an encoder:
\begin{equation}
    \mathcal{E}_i: \mathcal{X}_i \rightarrow \mathbb{R}^d,
\end{equation}
where $\mathcal{X}_i$ is the domain of $x_i$ and $d$ is the latent dimension. Unlike standard GAMs, which use a single predictor per feature, NAEs introduce a set of $K$ expert networks for each feature. The $k$-th expert for feature $x_i$ produces:
\begin{equation} \label{eq:expert_pred}
    o_{ik} = g_{ik}\bigl(\mathcal{E}_i(x_i)\bigr),
\end{equation}
where $g_{ik}: \mathbb{R}^d \rightarrow \mathbb{R}$ is typically a simple function (e.g., linear or shallow neural network) that maps the latent representation to a feature effect.

\subsection{Dynamic Gating and Expert Aggregation} \label{sec:gating}

A central component of NAEs is the dynamic gating mechanism, which adaptively integrates information across features to determine the relevance of each expert. For each feature $x_j$, we compute a score vector to quantify the influence of the \(x_i\) value on the expert selection for feature \(x_j\):
\begin{equation} \label{eq:aggregation}
    \varphi_j = \mu_j + \sum_{i=1}^{n} \mathcal{A}_{ij}^\top \mathcal{E}_i(x_i),
\end{equation}
where $\mathcal{A}_{ij} \in \mathbb{R}^{d \times K}$ are learnable parameters and $\mu_j \in \mathbb{R}^K$ is a learnable bias term. This formulation allows the gating to depend on the context provided by all features, relaxing the strict additivity of classical GAMs.
To ensure that only the most relevant experts contribute to the final prediction, a sparsity-inducing mask $M_j \in \mathbb{R}^K$ is applied to $\varphi_j$, where entries corresponding to less significant scores are set to $-\infty$, with remaining values set to be 0. This masking procedure effectively zeroes out the contribution of less important experts after normalization. The relevance weights are then obtained via a softmax operation:
\begin{equation} \label{eq:relevance}
    r_{jk} = \frac{\exp\bigl(\varphi_j[k] + M_j[k]\bigr)}{\sum_{l=1}^{K} \exp\bigl(\varphi_j[l] + M_j[l]\bigr)}, \quad k = 1,\cdots,K,
\end{equation}
which ensures that $r_{jk} \geq 0$ and $\sum_{k=1}^{K} r_{jk} = 1$. Subsequently, the output for feature $x_i$ is computed as a weighted aggregation of the individual expert predictions:
\begin{equation} \label{eq:expert_aggregation}
    o_i = \sum_{k=1}^{K} r_{ik} o_{ik},
\end{equation}
where \(o_{ik}\) is the output of the \(k\)-th expert for feature \(x_i\) as defined in Equation \ref{eq:expert_pred}. The overall model prediction is obtained by summing the feature-specific outputs along with an intercept term \(\omega_0\):
\begin{equation} \label{eq:final_prediction}
    \hat{y} = \omega_0 + \sum_{i=1}^{n} o_i.
\end{equation}
This structure preserves feature-level interpretability while allowing for adaptive, context-dependent modeling. The computational complexity of NAEs is discussed in Appendix \ref{sec:complexity}.

\subsection{Training Objective and Regularization}

Given a dataset $\{(x^t, y^t)\}_{t=1}^N$, NAEs are trained to optimize:
\begin{equation}\label{eq:loss}\small
\begin{aligned}
    \text{minimize} \quad &\frac{1}{N}\sum_{t=1}^N \mathcal{L}\bigl(y^t, \hat{y}^t\bigr) + \\
     &\frac{\lambda}{nNK}\sum_{t=1}^N\sum_{i=1}^n\sum_{k=1}^K\left(o_{ik}^t- \frac{1}{K} \sum_{l=1}^{K} o_{il}^t \right)^2,
\end{aligned}
\end{equation}
where $\mathcal{L}$ is a task-specific loss (e.g., mean squared error or cross-entropy), and $\lambda$ controls the strength of the expert variation penalty. This regularization encourages consistency among experts for each feature, providing a tunable mechanism to balance flexibility and interpretability. Standard regularization techniques (e.g., dropout, L2 regularization) are also applied to promote generalization.

\subsection{Interpretability and Feature Attribution}

NAEs maintain interpretability by quantifying the contribution of each feature to the final prediction. For each feature $x_i$, the model aggregates expert outputs using the learned gating weights, ensuring that feature attributions remain bounded and meaningful. To further aid interpretation, we define upper and lower bounds for each feature's effect:
\begin{equation}\small
\begin{aligned}
    \text{upper}(o_i|x_i) &= \max_{k} g_{ik}\bigl(\mathcal{E}_i(x_i)\bigr), \\
    \text{lower}(o_i|x_i) &= \min_{k} g_{ik}\bigl(\mathcal{E}_i(x_i)\bigr).
\end{aligned}
\end{equation}
These bounds characterize the range of possible contributions for each feature, and visualizing the distribution of actual feature effects within this range provides insight into both the magnitude and variability of feature influences.

While feature attribution is central to NAEs, the model also enables analysis of feature interactions. Since the score vector in Equation~\eqref{eq:aggregation} is constructed as an additive function over all features, we can isolate the interaction between features \(x_i\) and \(x_j\) by focusing on their respective contributions in the gating function:
\begin{equation}
    \varphi_j' = \mathcal{A}_{ij}^\top \mathcal{E}_i(x_i) \qquad\text{and}\qquad \varphi_i' = \mathcal{A}_{ji}^\top \mathcal{E}_j(x_j).
\end{equation}
By removing the influence of other features, the updated feature outputs $o_i'$ and $o_j'$ with Equation \eqref{eq:expert_aggregation} reflect the pairwise feature interactions captured by NAE. Further discussion of feature interactions is provided in Appendix \ref{sec:ga2m}.
\section{THEORETICAL ANALYSIS OF NAES}
\label{sec:theory}

In this section, we characterize the expressivity of NAEs relative to classical additive families and analyze how they control model additivity for feature-level interpretability. Throughout we assume each feature domain $\mathcal X_i\subset\mathbb R$ is compact and all functions are continuous on their domains. NAEs use (i) per-feature encoders $E_i:\mathcal X_i\to\mathbb R^d$, (ii) expert functions $g_{ik}:\mathbb R^d\to\mathbb R$, (iii) softmax gating weights $r_{jk}(x)\ge 0$ with $\sum_{k=1}^K r_{jk}(x)=1$, and (iv) per-feature outputs,
\begin{equation*}\small
\begin{aligned}
o_i(x) = \sum_{k=1}^K r_{ik}(x)  g_{ik}\!\big(E_i(x_i)\big), \quad \hat y(x) = \omega_0+\sum_{i=1}^n o_i(x),
\end{aligned}
\end{equation*}
as in Equations \eqref{eq:expert_pred}-\eqref{eq:final_prediction}. Training uses the loss and expert-variation penalty from Equation \eqref{eq:loss}. 
For theoretical clarity, we assume a dense expert routing architecture in which all $K$ experts are available for each feature and gating is performed over the full set.

\paragraph{Model classes.}
Let $\GAM$ denote generalized additive models
\begin{equation}\small
\GAM =\Big\{ f(x)=\omega_0+\sum_{i=1}^n f_i(x_i) \Big| f_i \ \text{continuous}\Big\}.
\end{equation}
Let $\GAAM$ denote generalized additive models with pairwise interactions (GA$^2$M):
\begin{equation}\small
\begin{aligned}
\GAAM =\Big\{&f(x)=\omega_0+\sum_{i=1}^n f_i(x_i)+\\
&\sum_{1\le i<j\le n} f_{ij}(x_i,x_j)\Big| f_i,f_{ij}\ \text{continuous}\Big\}.
\end{aligned}
\end{equation}
We write $\NAE(K)$ for the set of functions represented by an NAE with $K$ experts per feature.

\subsection{Exact containment of GAMs}
\begin{theorem}[GAM containment]
\label{thm:GAM}
For any $f\in\GAM$ there exists $K=1$ and parameters of an NAE such that $\hat y(x)=f(x)$ for all $x$.
\end{theorem}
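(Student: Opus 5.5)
With $K=1$ the softmax gating is trivial. Each $r_{i1}(x)$ is a softmax over a single entry, so $r_{i1}(x)=\exp(\varphi_i[1])/\exp(\varphi_i[1])=1$ for every $x$, whatever the values of $\mathcal{A}_{ij}$ and $\mu_i$. (In the dense-routing setting assumed in this section no mask is applied; with a mask, the single entry is necessarily kept.) So all cross-feature dependence through the gate disappears. By \eqref{eq:expert_aggregation} the per-feature output reduces to $o_i(x)=g_{i1}(E_i(x_i))$, which depends on $x_i$ alone. The prediction is then $\hat y(x)=\omega_0+\sum_i g_{i1}(E_i(x_i))$, which is already of GAM form. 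It remains to choose the parameters so that each term matches $f_i$.

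Given $f(x)=\omega_0+\sum_i f_i(x_i)$ with each $f_i$ continuous, I would take the NAE intercept equal to $\omega_0$. The gating parameters are irrelevant, so I would set them to $\mathcal{A}_{ij}=0$ and $\mu_j=0$. For the encoder I would use a coordinate-embedding $E_i(x_i)=(x_i,0,\dots,0)\in\mathbb{R}^d$, and for the expert $g_{i1}(z)=f_i(z_1)$. Then $g_{i1}(E_i(x_i))=f_i(x_i)$ exactly. Equivalently, one could put $f_i$ into the encoder, $E_i(x_i)=f_i(x_i)e_1$, and take a linear expert $g_{i1}(z)=z_1$. Summing over $i$ gives $\hat y(x)=f(x)$ for all $x$.

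The only real obstacle is interpretive, namely what counts as admissible ``parameters of an NAE.'' The theorem claims exact equality, so encoders and experts must be allowed to be arbitrary continuous functions in the function-class sense of this section, which lists ``(i) per-feature encoders $E_i$, (ii) expert functions $g_{ik}$'' without an architectural restriction. I would state this explicitly. I would also remark that if $E_i$ and $g_{ik}$ were restricted to finite neural networks, the same construction combined with the universal approximation theorem on the compact $\mathcal X_i$ gives uniform approximation rather than exact equality. Finally, I would note that the regularizer in \eqref{eq:loss} vanishes identically when $K=1$, so this construction is not penalized and attains zero expert-variation penalty.
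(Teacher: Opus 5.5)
Your proposal is correct and follows the paper's proof: take $K=1$ so that $r_{i1}\equiv 1$, choose $g_{i1}\circ E_i=f_i$, and set the intercept to $\omega_0$. Your additions are consistent with, and slightly more explicit than, the paper's one-line argument: you explain why a one-entry softmax equals $1$ regardless of $\mathcal{A}_{ij}$ and $\mu_j$, and you give a concrete encoder/expert factorization; your second choice, $E_i(x_i)=f_i(x_i)e_1$ with $g_{i1}(z)=z_1$, also avoids having to extend $f_i$ beyond $\mathcal X_i$.
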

\begin{proof}
Choose $K=1$, set $r_{i1}(x)\equiv 1$, and choose $g_{i1}\!\circ E_i=f_i$. Then $o_i(x)=f_i(x_i)$ and $\hat y(x)=\omega_0+\sum_i f_i(x_i)=f(x)$.
\end{proof}

\subsection{From GA$^2$M to NAE via separable approximation}
To show the containment relative to GA$^2$M, we need the following approximation lemma, which follows from the Stone-Weierstrass theorem.

\begin{lemma}[Finite separable approximation of pairwise terms]
\label{lem:sep}
Fix $(i,j)$ and $\varepsilon>0$. There exist $M\in\mathbb N$ and continuous functions $\{u_m:\mathcal X_i\to\mathbb R,\ v_m:\mathcal X_j\to\mathbb R\}_{m=1}^M$ such that
\(
\sup_{(x_i,x_j)}\big|f_{ij}(x_i,x_j)-\sum_{m=1}^M u_m(x_i)v_m(x_j)\big|<\varepsilon.
\)
\end{lemma}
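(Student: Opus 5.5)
We will apply the Stone--Weierstrass theorem on the compact product domain $\mathcal X_i\times\mathcal X_j$. Since each $\mathcal X_i\subset\mathbb R$ is compact, the product $\mathcal X_i\times\mathcal X_j\subset\mathbb R^2$ is compact Hausdorff, and $f_{ij}$ is continuous on it by the standing assumption of Section~\ref{sec:theory}. Let $\mathcal S$ denote the set of all finite sums $\sum_{m=1}^M u_m(x_i)v_m(x_j)$ with $u_m\in C(\mathcal X_i)$ and $v_m\in C(\mathcal X_j)$. The whole argument amounts to showing that $\mathcal S$ is dense in $C(\mathcal X_i\times\mathcal X_j)$ under the sup norm. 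Once that is established, there is an element of $\mathcal S$ within $\varepsilon$ of $f_{ij}$. Its number of terms gives $M$, and its factors give the $u_m,v_m$.

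First we check that $\mathcal S$ is a subalgebra of $C(\mathcal X_i\times\mathcal X_j)$. Each product $u(x_i)v(x_j)$ is continuous, being a product of continuous functions composed with the coordinate projections. $\mathcal S$ is closed under addition and under scalar multiplication, since a scalar can be absorbed into a $u_m$. It is closed under multiplication because
\[
\Big(\sum_m u_m v_m\Big)\Big(\sum_{m'} u'_{m'}v'_{m'}\Big)=\sum_{m,m'}(u_mu'_{m'})(v_mv'_{m'}),
\]
and each $u_mu'_{m'}$ and $v_mv'_{m'}$ is again continuous on the appropriate factor.

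Next we verify the two remaining Stone--Weierstrass hypotheses. $\mathcal S$ contains the constants: take $M=1$ and $u_1\equiv c$, $v_1\equiv1$. $\mathcal S$ also separates points. If $(a,b)\neq(a',b')$, then either $a\neq a'$ or $b\neq b'$. In the first case the function $x_i\cdot 1$ separates the two points, and in the second case $1\cdot x_j$ does. Both functions lie in $\mathcal S$ because the identity map is continuous on $\mathcal X_i$ and on $\mathcal X_j$. The real Stone--Weierstrass theorem then gives that $\mathcal S$ is dense, so some $s\in\mathcal S$ satisfies $\|f_{ij}-s\|_\infty<\varepsilon$. This is exactly the claimed statement.

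None of these steps is a real obstacle. The only place requiring care is compactness of the product domain, which holds because each factor is compact. An alternative route would use the classical Weierstrass theorem to approximate $f_{ij}$ by a polynomial in two variables. A bivariate polynomial is itself a finite sum of monomials $x_i^p x_j^q$, so it is already in separable form. We keep this as a fallback that gives an explicit construction.
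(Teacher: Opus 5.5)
Your proposal is correct and matches the paper's proof: the paper also takes the algebra of finite sums $\sum_m u_m(x_i)v_m(x_j)$, checks that it is a subalgebra of $C(\mathcal X_i\times\mathcal X_j)$ containing the constants and separating points, and applies Stone--Weierstrass. The only cosmetic difference is that you separate points with the coordinate functions (available because $\mathcal X_i\subset\mathbb R$), whereas the paper picks any continuous $u$ with $u(x_i)\neq u(x_i')$, which also covers the general compact metrizable domains assumed in its appendix.
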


We now show that each separable term $u(x_i)v(x_j)$ is representable by a single feature head $o_i(\cdot)$ using two experts and a softmax gate that depends on $x_j$ only.

\begin{lemma}[Two-expert product construction]
\label{lem:two-expert}
Let $u:\mathcal X_i\!\to\!\mathbb R$ and $v:\mathcal X_j\!\to\!\mathbb R$ be bounded and continuous. Choose $C>\sup_{x_j}|v(x_j)|$. Instantiate two experts for feature $i$ with outputs
\(
g_{i,+}\!\circ E_i(x_i)=+C u(x_i),\quad
g_{i,-}\!\circ E_i(x_i)=-C u(x_i),
\)
and gate them by logits $\varphi_i^{(+)}(x)=\alpha(x)-\beta(x_j)$, $\varphi_i^{(-)}(x)=\alpha(x)+\beta(x_j)$ with softmax over $\{+,-\}$ only. Then
\begin{equation}\small
r_{i,+}(x)-r_{i,-}(x) = - \tanh\big(\beta(x_j)\big).
\end{equation} 
Setting $\beta(x_j)= - \operatorname{arctanh}\!\big(v(x_j)/C\big)$ yields
\begin{equation}\small
o_i(x)=r_{i,+}(x) C u(x_i) - r_{i,-}(x) C u(x_i)
=u(x_i) v(x_j).
\end{equation}
Moreover, we can enforce the ``pairwise-only'' dependence by choosing the gating parameters so that $\varphi_i^{(\pm)}$ depend on $x_j$ but not on other coordinates (i.e., zeroing $A_{\ell i}$ for $\ell\not=j$ in Equation \eqref{eq:aggregation}).
\end{lemma}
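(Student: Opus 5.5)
The plan is a direct computation in three steps: first the two-way softmax, then the choice of $\beta$, then a check that the construction is realizable inside the gating architecture of Equation~\eqref{eq:aggregation}.

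\emph{Step 1: the softmax difference.} With logits $\alpha(x)-\beta(x_j)$ and $\alpha(x)+\beta(x_j)$, the common term $\alpha(x)$ cancels under the softmax, giving
\[
r_{i,+}(x)=\frac{e^{-\beta}}{e^{-\beta}+e^{\beta}},\qquad r_{i,-}(x)=\frac{e^{\beta}}{e^{-\beta}+e^{\beta}}.
\]
Their difference is
\[
r_{i,+}-r_{i,-}=\frac{e^{-\beta}-e^{\beta}}{e^{\beta}+e^{-\beta}}=-\tanh(\beta(x_j)),
\]
which proves the first identity.

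\emph{Step 2: well-definedness of $\beta$.} Since $C>\sup_{x_j}|v(x_j)|$, the ratio $v(x_j)/C$ lies in the compact interval $[-s,s]$ with $s=\sup|v|/C<1$. The function $\operatorname{arctanh}$ is continuous on this interval, so $\beta(x_j)=-\operatorname{arctanh}(v(x_j)/C)$ is continuous and bounded. The bound matters: it means the gate never has to saturate, so finite logits suffice.

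\emph{Step 3: the product.} Substituting this $\beta$ gives $r_{i,+}-r_{i,-}=\tanh(\operatorname{arctanh}(v/C))=v(x_j)/C$. Then
\[
o_i=r_{i,+}\,Cu(x_i)-r_{i,-}\,Cu(x_i)=Cu(x_i)\,(r_{i,+}-r_{i,-})=u(x_i)\,v(x_j),
\]
as claimed.

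\emph{Step 4: realizability and pairwise-only dependence.} The remaining task is to show the gating architecture can produce these logits. In Equation~\eqref{eq:aggregation} the logits are $\mu_i+\sum_\ell \mathcal A_{\ell i}^\top E_\ell(x_\ell)$, so the construction goes as follows:
\begin{itemize}
\item set $\mathcal A_{\ell i}=0$ for every $\ell\neq j$, so the logits depend only on $x_j$;
\item set $\alpha\equiv 0$ and $\mu_i=0$;
\item choose the encoder $E_j$ to include a coordinate equal to $\beta(x_j)$, and let the two columns of $\mathcal A_{ji}$ read that coordinate with weights $-1$ and $+1$;
\item choose the experts so that $g_{i,\pm}\circ E_i=\pm Cu$, e.g.\ with one coordinate of $E_i$ equal to $u$ and linear $g$.
\end{itemize}

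\emph{Main obstacle.} The delicate point is that $E_j$ is shared: its other coordinates must still serve feature $j$'s own experts. This is handled by enlarging the latent dimension $d$ so that $E_j$ carries several coordinates, each read by a different consumer. Realizing $\beta$ and $u$ exactly is then a modeling assumption on the encoder class (arbitrary continuous maps); if the encoders are instead neural networks, exactness relaxes to uniform approximation on the compact domain. Uniform approximation is enough here because the softmax map and the product are Lipschitz on bounded sets, so small errors in $\beta$ and $u$ yield small errors in $o_i$.
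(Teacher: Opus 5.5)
Your proposal is correct and follows the paper's proof essentially verbatim. It uses the same two-way softmax computation giving $r_{i,+}-r_{i,-}=-\tanh(\beta(x_j))$, the same substitution $\beta=-\operatorname{arctanh}(v/C)$, and the same factorization $o_i=Cu(x_i)\,(r_{i,+}-r_{i,-})=u(x_i)v(x_j)$. Your Steps 2 and 4 spell out what the paper compresses into the remark that $\alpha$ is constant and $\beta$ depends on $x_j$ only: that $C>\sup|v|$ keeps $\beta$ finite and continuous, and how $\mathcal A_{ji}$, $\mu_i$ and an extra encoder coordinate produce logits depending on $x_j$ alone.
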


\begin{proof}[Proof sketch]
With softmax, if the logits are $\alpha\pm\beta$, the probabilities are $\sigma(-2\beta)$ and $\sigma(2\beta)$, hence their difference is $- \tanh(\beta)$. The rest is algebra using boundedness of $v$ and the choice of $C$ to keep $r_{i,\pm}\in[0,1]$.
\end{proof}

\begin{theorem}[GA$^2$M containment up to arbitrary precision]
\label{thm:ga2m}
For any $f\in\GAAM$ and any $\epsilon>0$ there exists $K$ and an NAE such that $\sup_x |\hat y(x)-f(x)|<\epsilon$. In particular, for each feature $i$ it suffices to take
\begin{equation}
K_i \le1+2\!\!\sum_{j\ne i}\!M_{ij},
\end{equation}
where $M_{ij}$ is the number of separable terms used for $f_{ij}$ in Lemma \ref{lem:sep}.
\end{theorem}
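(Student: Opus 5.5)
The plan is to decompose $f\in\GAAM$ term by term, approximate each pairwise interaction by a finite separable sum using Lemma~\ref{lem:sep}, and realize every separable product as a pair of experts via Lemma~\ref{lem:two-expert}. The one technical point is that a single softmax is shared across all experts of feature $i$. Several independent two-expert gates must therefore live inside one gate, and that is the step I expect to be the main obstacle.

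\textbf{Step 1: error budget.} Write $f=\omega_0+\sum_i f_i+\sum_{i<j}f_{ij}$. Let $P=\binom n2$. For each pair $(i,j)$ with $i<j$, apply Lemma~\ref{lem:sep} with tolerance $\epsilon/(2P)$. This gives $f_{ij}\approx\sum_{m=1}^{M_{ij}}u^{ij}_m(x_i)v^{ij}_m(x_j)$, and we assign this sum to the head of feature $i$. Set $M_{ji}=0$, or split pairs symmetrically; either way the count stays within the stated bound. The remaining $\epsilon/2$ is kept in reserve for the gating error in Step 3. All $u,v$ are continuous on compact domains, hence bounded.

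\textbf{Step 2: experts for feature $i$.} Feature $i$ gets one ``additive'' expert together with the pairs $(+,m,j)$ and $(-,m,j)$, which gives $K_i=1+2\sum_{j\ne i}M_{ij}$ experts. We cannot simply zero out unused experts, because all $K_i$ experts share one softmax. Instead we scale the outputs. Let $L=K_i$ and suppose we can gate so that the weights are $r_0=1/L$ and, for each block $b=(m,j)$, $r_{b,\pm}=\tfrac1L(1\mp v_b(x_j)/C_b)$ (up to sign convention). Note these weights sum to $1$. Now choose expert outputs $g_{i0}\circ E_i=L f_i$ and $g_{b,\pm}\circ E_i=\pm\tfrac{L}{2}C_b u_b$ with suitable signs. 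By the computation in Lemma~\ref{lem:two-expert}, each block contributes $\tfrac{L}{2}C_b u_b\,(r_{b,+}-r_{b,-})=u_b v_b$, and the additive expert contributes $f_i$. The scaling by $L$ is what compensates for the shared normalization.

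\textbf{Step 3: realizing the gate (main obstacle).} Shared-softmax weights satisfy $r_k\propto e^{\varphi_k}$. The targets above are not of the Lemma~\ref{lem:two-expert} form $\alpha\pm\beta$ with a shared $\alpha$ per block. However, the targets are strictly positive once $C_b>\sup|v_b|$, so we can set $\varphi_k=\log(\text{target}_k)$ exactly. Each of these logits depends only on $x_j$ through a continuous function. The paper's gate is linear in $\mathcal E_j(x_j)$, so I would enlarge the latent dimension $d$ to make $\mathcal E_j$ carry the needed coordinates. Concretely, the $x_j$-dependence of every logit $\varphi_{i,k}$ is placed in a dedicated coordinate of $\mathcal E_j$ (or approximated by a uniform universal-approximation argument), and $\mathcal A_{\ell i}$ is zeroed for all $\ell$ other than $j$. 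If only approximate logits are available, then continuity of softmax on compacts and boundedness of the expert outputs give a uniform error, which we make $<\epsilon/(2n)$ per head. This is where the reserved $\epsilon/2$ is spent.

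\textbf{Step 4: assembly.} Summing the heads gives $\hat y=\omega_0+\sum_i f_i+\sum_{i<j}\sum_m u^{ij}_m v^{ij}_m$ plus gating error. The triangle inequality then yields $\sup_x|\hat y-f|<\epsilon$. Taking $K=\max_i K_i$ and padding the heads with fewer experts by duplicate experts, whose gate weights are split evenly, completes the construction.
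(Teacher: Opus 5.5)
Your argument is correct. It follows the paper's skeleton: Lemma~\ref{lem:sep} at tolerance $\epsilon/(2\binom{n}{2})$ per pair, each separable term assigned to head $i$, and one additive expert plus two per term, giving $K_i\le 1+2\sum_{j\ne i}M_{ij}$. It departs from the paper exactly at the step you flagged, and there your version is the more careful one.

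The paper's proof says to realize each $f_i$ by a single expert ``with constant gate'' and each product by Lemma~\ref{lem:two-expert}. That lemma's computation uses a softmax over $\{+,-\}$ \emph{only}. Under the dense-routing assumption of Section~\ref{sec:theory}, all $K_i$ experts of head $i$ share one softmax. So the additive expert cannot receive weight $1$ while each block independently receives total mass $1$, and the offset $\alpha$ in Lemma~\ref{lem:two-expert} no longer cancels. The paper does not address this.

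Your device resolves it exactly: baseline weight $1/L$ on every expert, within-block perturbations $\pm v_b/(LC_b)$, expert outputs scaled by $L$, and logits set to the logarithms of these strictly positive targets.

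Your remark that the targets are not of the lemma's form is slightly off. With $r_{b,\pm}=\tfrac1L(1\pm v_b/C_b)$ (this sign choice also gives $+u_bv_b$ when $g_{b,\pm}=\pm\tfrac L2 C_b u_b$), one has $\log r_{b,\pm}=\alpha_b\pm\beta_b$, where $\beta_b=\operatorname{arctanh}(v_b/C_b)$ and $\alpha_b=-\log L+\tfrac12\log(1-v_b^2/C_b^2)$. So the form survives. What changes is that the block-specific offset $\alpha_b(x_j)$ must be chosen to hold each block's mass at $2/L$, which is precisely what your construction does.

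What each route buys: the paper's sketch gains brevity. Yours actually realizes the separable approximant in the architecture as defined, and your padding step covers the passage from per-feature $K_i$ to a single $K$, which the paper leaves implicit.

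Two small fixes. First, zeroing ``$\mathcal A_{\ell i}$ for $\ell\ne j$'' must be done column by column (per expert), since head $i$ hosts blocks with different partners $j$. Second, with exactly realized logits the reserved $\epsilon/2$ is only needed if you insist on approximating the encoders, e.g.\ by MLPs.
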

\begin{proof}[Proof sketch]
Write
$
f(x)=\omega_0+\sum_i f_i(x_i)+\sum_{i<j} \sum_{m=1}^{M_{ij}} u_{ijm}(x_i)v_{ijm}(x_j)+r(x),
$
with $\|r\|_\infty<\epsilon/2$ by Lemma \ref{lem:sep} on each pair. Realize each $f_i$ with a dedicated single expert (as in Theorem \ref{thm:GAM}). For each separable $u_{ijm}v_{ijm}$ assign it to feature head $i$ and apply Lemma \ref{lem:two-expert}. Summing all heads and the intercept recovers $f$ up to the residual $r$, giving uniform error $<\epsilon$. The bound on $K_i$ counts one additive expert plus two per separable term.
\end{proof}

Thus, $\GAM \subsetneq \overline{\NAE(K)}$ (closure in uniform norm, i.e. functions uniformly approximable by $\NAE$) whenever gates are allowed to depend on other features. Hence NAEs are more expressive than vanilla GAM/GA$^2$M but remain feature-decomposed at prediction time.

\subsection{Controlling additivity via the expert-variation penalty}

A key feature of NAEs is the ability to interpolate between strictly additive and highly flexible models by tuning the expert variation penalty. To quantify this, we define an additivity metric:
\begin{equation}\label{eq:additivity}
\text{Additivity} = \frac{1}{n}\sum_{i=1}^n \frac{\text{Var}\bigl(\mathbb{E}(o_i|x_i)\bigr) + \delta}{\text{Var}(o_i) + \delta},
\end{equation}
where $o_i$ is the contribution of feature $i$ to the prediction, and $\delta$ is a small constant for numerical stability. For a strictly additive model, this metric equals 1.
The training objective of NAEs includes an expert variation penalty weighted by $\lambda$ (Equation \eqref{eq:loss}), which encourages consistency among experts for each feature. As $\lambda$ increases, the model becomes more additive:

\begin{theorem}[Monotone additivity and additive limit]
\label{thm:additivity}
Let $A(\lambda)$ be the additivity metric of an NAE trained with penalty $\lambda\ge 0$. Then $A(\lambda)$ is nondecreasing in $\lambda$. Moreover, $\lim_{\lambda\to\infty} A(\lambda)=1$, and any limit point of minimizers as $\lambda\to\infty$ is a GAM, realized by setting $r_{ik}\equiv$ constants and $g_{ik}$ equal across $k$ for each feature.
\end{theorem}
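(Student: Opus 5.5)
We treat the trained model as a minimizer $\theta_\lambda$ of $J_\lambda(\theta)=L(\theta)+\lambda P(\theta)$, where $L$ is the empirical loss and $P$ the expert-variation penalty of Equation \eqref{eq:loss}. The first step is the standard monotonicity argument for penalized objectives. For $\lambda_1<\lambda_2$, optimality of $\theta_{\lambda_1}$ and $\theta_{\lambda_2}$ gives $L(\theta_{\lambda_1})+\lambda_1P(\theta_{\lambda_1})\le L(\theta_{\lambda_2})+\lambda_1P(\theta_{\lambda_2})$ and $L(\theta_{\lambda_2})+\lambda_2P(\theta_{\lambda_2})\le L(\theta_{\lambda_1})+\lambda_2P(\theta_{\lambda_1})$. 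Adding the two inequalities yields $(\lambda_2-\lambda_1)\bigl(P(\theta_{\lambda_2})-P(\theta_{\lambda_1})\bigr)\le 0$, so the optimal penalty $P(\lambda):=P(\theta_\lambda)$ is nonincreasing.

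The second step, which I expect to be the main obstacle, is to pass from the penalty to the additivity metric. The key decomposition is
\[
o_i-\bar g_i(x_i)=\sum_k r_{ik}(x)\bigl(g_{ik}(E_i(x_i))-\bar g_i(x_i)\bigr),\qquad \bar g_i=\tfrac1K\textstyle\sum_l g_{il}\circ E_i .
\]
Since $\bar g_i$ depends on $x_i$ alone, Cauchy--Schwarz together with $\sum_k r_{ik}^2\le 1$ gives $\mathbb{E}\bigl(o_i-\bar g_i\bigr)^2\le K\cdot(\text{per-feature penalty})$. Because $\mathbb{E}(o_i\mid x_i)$ is the $L^2$-projection of $o_i$ onto functions of $x_i$, we have $\operatorname{Var}(o_i)-\operatorname{Var}(\mathbb{E}(o_i\mid x_i))=\mathbb{E}\operatorname{Var}(o_i\mid x_i)\le \mathbb{E}(o_i-\bar g_i)^2\le K P_i$. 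Therefore
\[
1-A\le \frac1n\sum_i \frac{KP_i}{\operatorname{Var}(o_i)+\delta}.
\]
The difficulty is that this is only an upper bound on $1-A$ that is monotone in $P$. Exact monotonicity of $A$ does not follow from monotonicity of $P$ alone, because $\operatorname{Var}(o_i)$ can also change with $\lambda$. I would first try to make the statement precise by interpreting ``nondecreasing'' through this controlling bound. Failing that, I would impose a regularity assumption, such as a unique minimizer path along which non-additive variance is generated only by expert disagreement. This would make $A$ a monotone function of $P$ directly. I expect the paper's argument to be at this level of rigor, i.e. to identify $1-A$ with a normalized penalty.

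The third step is the limit. Comparing with the GAM of Theorem \ref{thm:GAM}, realized with all $K$ experts identical so that $P=0$, gives $J_\lambda(\theta_\lambda)\le L(\theta_{\mathrm{GAM}})$ for all $\lambda$. Hence $\lambda P(\theta_\lambda)\le L(\theta_{\mathrm{GAM}})-\inf L$, so $P(\theta_\lambda)=O(1/\lambda)\to 0$. The bound from the second step then gives $1-A(\lambda)\le nK\,O(1/\lambda)/\delta\to 0$, using $\delta>0$ to avoid division by zero, and so $A(\lambda)\to 1$.

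For limit points of minimizers, assume the parameter set is compact, or that minimizers stay bounded by a coercivity or weight-decay assumption, and that $P$ is continuous. Then any limit point $\theta^*$ satisfies $P(\theta^*)=0$ on the data, so $g_{ik}\circ E_i=\bar g_i$ for all $k$. It follows that $o_i=\bar g_i(x_i)\sum_k r_{ik}=\bar g_i(x_i)$ regardless of the gates, so the realized function is a GAM. The gates can then be replaced by constants without changing $\hat y$, which gives the stated realization.
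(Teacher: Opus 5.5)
\textbf{Overlap with the paper.} Your first step, your third step and your limit-point argument match the paper's proof. The paper uses the same exchange argument to show that $\mathsf{Pen}(\theta_\lambda)$ is nonincreasing. It uses the same zero-penalty GAM competitor to get $\mathsf{Pen}(\theta_\lambda)\le C/\lambda$. It makes the same observation that vanishing within-head dispersion makes $o_i(x_t)$ independent of the gates, which gives a GAM realization on the training points.

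\textbf{The gap.} The genuine gap is the one you flag yourself: nothing in the proposal shows that $A(\lambda)$ is nondecreasing. This cannot be extracted from monotonicity of the penalty, because $A$ also depends on the unpenalized gates $r_{ik}$, on the denominators $\operatorname{Var}(o_i)+\delta$, and on how the total penalty splits across features. Worse, $A(\lambda)$ is not even well defined when minimizers are not unique. Take exactly additive data $y=f_1(x_1)+f_2(x_2)$ at $\lambda=0$. The heads $o_1=f_1$, $o_2=f_2$ and the heads $o_1=f_1+u(x_1)v(x_2)$, $o_2=f_2-u(x_1)v(x_2)$ give identical predictions; the latter uses two experts per head, gated by the other feature as in the two-expert product construction. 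Both are therefore minimizers, yet they have $A=1$ and $A<1$ respectively.

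\textbf{What the paper does.} The paper does not close this gap either. Its appendix restatement silently replaces ``$A(\lambda)$ is nondecreasing'' by ``$\mathsf{Pen}(\theta_\lambda)$ is nonincreasing'', and the ``convex-order argument'' of the sketch is never given. So your expectation that the paper identifies $1-A$ with a normalized penalty is mistaken: it draws no quantitative link between them. A correct statement needs either an explicit extra hypothesis of the kind you propose, or should assert monotonicity of $\mathsf{Pen}(\theta_\lambda)$ rather than of $A$. Monotonicity of $\mathsf{Pen}(\theta_\lambda)$ is the same as monotonicity of your upper bound on $1-A$.

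\textbf{Where you go beyond the paper.} Your second step does more than the paper on the limit claim. The paper gets $A\to 1$ only by noting that $A=1$ at an exactly additive model, which tacitly needs a continuity argument it never supplies. With the paper's normalization of the penalty, your inequality gives
\[
0\;\le\;1-A(\theta_\lambda)\;\le\;\frac{K}{\delta}\,\mathsf{Pen}(\theta_\lambda)\;\le\;\frac{KC}{\delta\,\lambda},
\]
which proves the convergence with a rate. This holds provided $A$ is computed on the same training sample over which the penalty is averaged, a restriction the paper's ``additive on the training set'' argument shares.

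Your boundedness or compactness assumption for limit points is also genuinely needed. The paper's step (3) asserts $o_{ik}(x_t)\to q_i(x_t)$, but vanishing dispersion only says the $K$ values approach each other, not that they converge. That step also writes $\sum_k r_{ik}q_i=Kq_i$, which should be $q_i$.
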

\begin{proof}[Proof sketch]
The penalty forces $g_{ik}(E_i(x_i))$ to contract toward their within-feature mean, collapsing each head to a single expert in the limit; gates then become irrelevant and the model reduces to an additive form, for which the metric equals $1$. Monotonicity follows from standard convex-order arguments on the within-feature variance term. Full proofs are given in the supplement.
\end{proof}

\section{EXPERIMENTS ON SIMULATED DATA}\label{sec:empirical_simul}

In this section, we empirically evaluate NAEs on synthetic datasets specifically constructed to assess their ability to model both additive and non-additive (multimodal) data distributions. We benchmark NAEs against traditional GAMs, implemented as Neural Additive Models (NAMs) \citep{agarwal2021neural}. Our experiments address two main questions: (1) Can NAEs accurately recover underlying shape functions in both unimodal (additive) and multimodal (non-additive) scenarios? (2) How does the expert variation penalty parameter \(\lambda\) influence the trade-off between model flexibility and additivity? Additional simulation studies on robustness to feature sparsity and increasing distributional complexity are provided in Appendix \ref{sec:simu_more}.

\subsection{Simulation Setup and Visualization}

We generate two synthetic datasets, each with 10{,}000 samples:

\noindent\textbf{Unimodal Distribution (Additive).} With \(x_1 \sim U(0,1)\) and \(\sigma = 0.1\), the response is sampled as:
\begin{equation}
    y \sim \mathcal{N}\Big(x_1 -\frac{1}{2} + \sin(4\pi x_1), \sigma^2 \Big),
\end{equation}

\noindent\textbf{Multimodal Distribution (Non-Additive).} With \(x_2\) uniformly sampled from \(\{-1,1\}\), the response is generated following:
\begin{equation}\label{eq:multimodal}
y \sim
\begin{cases}
    \mathcal{N}\Big(x_1 - \frac{1}{2} - \sin(4\pi x_1), \sigma^2 \Big), & \text{if } x_2 = -1,\\
    \mathcal{N}\Big(x_1 - \frac{1}{2} + \sin(4\pi x_1), \sigma^2 \Big), & \text{if } x_2 = 1,
\end{cases}
\end{equation}

Figure \ref{fig:simul_shape} compares the learned shape functions of NAM and NAE. In the unimodal setting, both models accurately recover the underlying pattern, and NAE maintains interpretability without introducing spurious complexity. In the multimodal setting, NAM produces a near-linear fit, failing to capture the interaction, while NAE recovers the multimodal structure, with upper and lower expert bounds reflecting the range of feature contributions.

\begin{figure}[h!]
    \centering
    \includegraphics[width=0.85\linewidth]{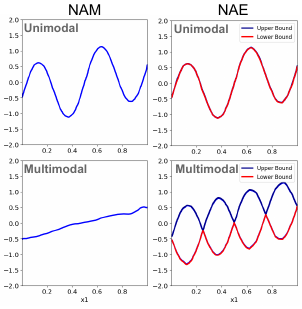}
    \caption{Shape functions learned by NAM and NAE on simulated data. For the multimodal case, NAE captures the oscillatory structure, while NAM fails.}
    \label{fig:simul_shape}
\end{figure}

\begin{table*}[h!]
    \centering
    \caption{Performance comparison on benchmark datasets. 
    ``Complex.'' denotes the explanation complexity, i.e., the number of components required to fully explain model predictions, with \(n\) representing the number of features.
    The best-performing models in each category are highlighted.
    }
\begin{center}
    \resizebox{\linewidth}{!}{
    \begin{tabular}{lccccccccccc}
        \toprule
        \bf \multirow{2.5}{*}{Model} & \bf \multirow{2.5}{*}{\makecell{Interpret-\\ability}} & \bf \multirow{2.5}{*}{\makecell{Complex.}} & \bf Housing & \bf \makecell{MIMIC-II} & \bf MIMIC-III & \bf Income & \bf Credit & \bf Year \\
        \cmidrule(lr){4-4}
        \cmidrule(lr){5-5}
        \cmidrule(lr){6-6}
        \cmidrule(lr){7-7}
        \cmidrule(lr){8-8}
        \cmidrule(lr){9-9}
         & & & RMSE $\downarrow$ & AUC $\uparrow$ & AUC $\uparrow$ & AUC $\uparrow$ & AUC $\uparrow$ & MSE $\downarrow$ \\
        \midrule
        \rowcolor[RGB]{234, 238, 234} \multicolumn{9}{c}{\textbf{Black-box Models}} \\
        \midrule
        MLP & No & N/A & \makecell{0.501 \scriptsize$\pm$0.006} & \makecell{0.835 \scriptsize$\pm$0.014} & \makecell{0.815 \scriptsize$\pm$0.009} & \makecell{0.914 \scriptsize$\pm$0.003} & \bf \makecell{0.981 \scriptsize$\pm$0.007} & \makecell{78.48 \scriptsize$\pm$0.56} \\
        NODE & No & N/A & \makecell{0.523 \scriptsize$\pm$0.000} & \makecell{0.843 \scriptsize$\pm$0.011} & \bf \makecell{0.828 \scriptsize$\pm$0.007} & \makecell{0.919 \scriptsize$\pm$0.003} & \makecell{0.981 \scriptsize$\pm$0.009} & \bf \makecell{76.21 \scriptsize$\pm$0.12} \\
        XGBoost & No & N/A & \bf \makecell{0.443 \scriptsize$\pm$0.000} & \bf \makecell{0.844 \scriptsize$\pm$0.012} & \makecell{0.819 \scriptsize$\pm$0.004} & \bf \makecell{0.928 \scriptsize$\pm$0.003} & \makecell{0.978 \scriptsize$\pm$0.009} & \makecell{78.53 \scriptsize$\pm$0.09} \\
        \midrule
        \rowcolor[RGB]{234, 238, 234} \multicolumn{9}{c}{\textbf{Interaction-explaining Models}} \\
        \midrule
        EB$^2$M & Yes & $O(n^2)$ & \makecell{0.492 \scriptsize$\pm$0.000} & \bf \makecell{0.848 \scriptsize$\pm$0.012} & \makecell{0.821 \scriptsize$\pm$0.004} & \bf \makecell{{0.928} \scriptsize$\pm$0.003} & \makecell{{0.982} \scriptsize$\pm$0.006} & \makecell{83.16 \scriptsize$\pm$0.01} \\
        NA$^2$M & Yes & $O(n^2)$ & \makecell{0.492 \scriptsize$\pm$0.008} & \makecell{0.843 \scriptsize$\pm$0.012} & \bf \makecell{{0.825} \scriptsize$\pm$0.006} & \makecell{0.912 \scriptsize$\pm$0.003} & \makecell{{0.985} \scriptsize$\pm$0.007} & \makecell{79.80 \scriptsize$\pm$0.05} \\
        NB$^2$M & Yes & $O(n^2)$ & \makecell{0.478\scriptsize$\pm$0.002} & \bf \makecell{{0.848} \scriptsize$\pm$0.012} & \makecell{0.819 \scriptsize$\pm$0.010} & \makecell{0.917 \scriptsize$\pm$0.003} & \makecell{0.978 \scriptsize$\pm$0.007} & \makecell{79.01 \scriptsize$\pm$0.03} \\        
        NODE-GA$^2$M & Yes & $O(n^2)$ & \bf \makecell{0.476 \scriptsize$\pm$0.007} & \makecell{{0.846} \scriptsize$\pm$0.011} & \makecell{{0.822} \scriptsize$\pm$0.007} & \makecell{0.923 \scriptsize$\pm$0.003} & \bf \makecell{{0.986} \scriptsize$\pm$0.010} & \bf \makecell{79.57 \scriptsize$\pm$0.12} \\
        \midrule
        \rowcolor[RGB]{204, 255, 255} \multicolumn{9}{c}{\textbf{Feature-explaining Models}} \\
        \midrule
        Linear & Yes & $O(n)$ & \makecell{0.735 \scriptsize$\pm$0.000} & \makecell{0.796 \scriptsize$\pm$0.012} & \makecell{0.772 \scriptsize$\pm$0.009} & \makecell{0.900 \scriptsize$\pm$0.002} & \makecell{0.976 \scriptsize$\pm$0.012} & \makecell{88.89 \scriptsize$\pm$0.40}\\
        Spline & Yes & $O(n)$ & \makecell{0.568 \scriptsize$\pm$0.000} & \makecell{0.825 \scriptsize$\pm$0.011} & \makecell{0.812 \scriptsize$\pm$0.004} & \makecell{0.918 \scriptsize$\pm$0.003} & \makecell{0.982 \scriptsize$\pm$0.011} & \makecell{85.96 \scriptsize$\pm$0.07} \\
        EBM & Yes & $O(n)$ & \makecell{0.559 \scriptsize$\pm$0.000} & \makecell{0.835 \scriptsize$\pm$0.011} & \makecell{0.809 \scriptsize$\pm$0.004} & \bf \makecell{{0.927} \scriptsize$\pm$0.003} & \makecell{0.974 \scriptsize$\pm$0.009} & \makecell{85.81 \scriptsize$\pm$0.11} \\
        NAM & Yes & $O(n)$ & \makecell{0.572 \scriptsize$\pm$0.005} &  \makecell{0.834 \scriptsize$\pm$0.013} & \makecell{0.813 \scriptsize$\pm$0.003} & \makecell{0.910 \scriptsize$\pm$0.003} & \makecell{0.977 \scriptsize$\pm$0.015} & \makecell{85.25 \scriptsize$\pm$0.01} \\
        NBM & Yes & $O(n)$ & \makecell{0.564 \scriptsize$\pm$0.001} & \makecell{0.833 \scriptsize$\pm$0.013} & \makecell{0.806 \scriptsize$\pm$0.003} & \makecell{0.918 \scriptsize$\pm$0.003} & \makecell{0.981 \scriptsize$\pm$0.007} & \makecell{85.10 \scriptsize$\pm$0.01} \\
        NODE-GAM & Yes & $O(n)$ & \makecell{0.558 \scriptsize$\pm$0.003} & \makecell{0.832 \scriptsize$\pm$0.011} & \makecell{0.814 \scriptsize$\pm$0.005} & \bf \makecell{{0.927} \scriptsize$\pm$0.003} & \makecell{0.981 \scriptsize$\pm$0.011} & \makecell{85.09 \scriptsize$\pm$0.01} \\
        \bf  NAE & Yes & $O(n)$ &  \bf \makecell{0.451 \scriptsize$\pm$0.002} & \bf \makecell{{0.847} \scriptsize$\pm$0.014} & \bf \makecell{{0.825} \scriptsize$\pm$0.006} & \bf \makecell{{0.927} \scriptsize$\pm$0.003} & \bf \makecell{{0.982} \scriptsize$\pm$0.009} & \bf \makecell{{78.66} \scriptsize$\pm$0.21} \\
        \bottomrule
    \end{tabular}
    }
\end{center}
     \label{tab:accuracy}
\end{table*}

\subsection{Effect of Variation Penalty \(\lambda\)}

The expert variation penalty \(\lambda\) in NAEs controls the balance between flexibility and additivity, as shown in our theoretical analysis (Section \ref{sec:theory}). To further validate this, we train NAEs on the multimodal simulated dataset with different regularization strengths \(\lambda=0.1, 1, 10\) and visualize the learned shape functions for \(x_1\).

As shown in Figure \ref{fig:simul_lambda}, small \(\lambda\) values allow for high flexibility, which perfectly recovers the underlying distribution and has a wide gap between upper and lower bounds. As \(\lambda\) increases, the bounds converge, enforcing additivity of the trained model. The corresponding additivity scores (0.597, 0.709, and 1.000 for \(\lambda=0.1, 1, 10\)) confirm the theoretical predictions.

\begin{figure}[h!]
    \centering
    \includegraphics[width=1.0\linewidth]{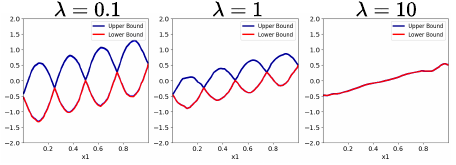}
    \caption{Effect of the expert variation penalty \(\lambda\) on the learned shape function of \(x_1\) in NAEs. As \(\lambda\) increases, the model transitions from flexible to strictly additive.}
    \label{fig:simul_lambda}
\end{figure}

\section{EXPERIMENTS ON REAL-WORLD DATA}\label{sec:empirical_real}

In this section, we evaluate NAEs on a suite of real-world datasets, focusing on both predictive accuracy and interpretability. We describe the datasets and baseline models, analyze predictive performance and feature attributions, and investigate the trade-off between accuracy and interpretability. Additional ablation studies on the gating mechanism and number of experts are provided in Appendices \ref{sec:mixnam-d}, \ref{sec:mixnam-e}, and \ref{sec:scaling}.

\subsection{Datasets and Baselines}

We evaluate NAEs on six widely used datasets spanning regression and classification tasks: Housing \citep{pace1997sparse}, MIMIC-II \citep{saeed2011multiparameter}, MIMIC-III \citep{johnson2016mimic}, Income \citep{blake1998uci}, Credit, and Year. These datasets vary in size, feature types, and the presence of categorical variables, providing a comprehensive testbed. Dataset statistics and more details are in Appendix \ref{sec:data_desc}.

\begin{figure*}[h!]
    \centering
    \includegraphics[width=0.95\linewidth]{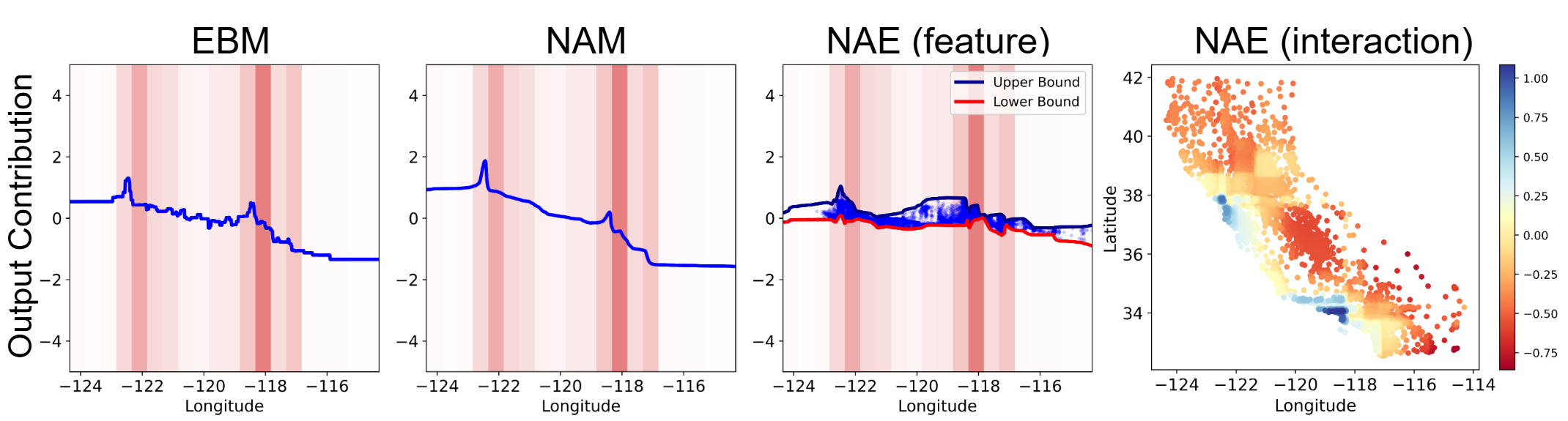}
    \caption{Comparison of Longitude effect on house price predictions across models. Y-axis shows mean-centered feature contributions. Background bars indicate normalized data density. Blue dots show actual feature effects. Longitude-Latitude interaction learned by NAE is also shown.}
    \label{fig:explanation}
\end{figure*}

We benchmark NAEs against a range of baselines: linear and spline models (basic interpretability), Neural Additive Models (NAM) \citep{agarwal2021neural}, Neural Basis Models (NBM) \citep{radenovic2022neural}, Explainable Boosting Machine (EBM) \citep{lou2012intelligible,nori2019interpretml}, NODE-GAM \citep{chang2022node}, and their pairwise-interaction variants (e.g., NA$^2$M, NB$^2$M, NODE-GA$^2$M). Black-box models such as MLP, NODE \citep{popov2020neural}, and XGBoost \citep{chen2016xgboost} are also included. Implementation details are in Appendices \ref{sec:model_implement} and \ref{sec:app_hyper}.

\subsection{Predictive Performance and Feature Attribution}

Table \ref{tab:accuracy} compares NAEs to all baselines. Metrics are reported with arrows indicating the desired direction. 
The results show that while traditional additive models (e.g., EBM, NAM) provide feature-level explanations, they often underperform compared to models that capture interactions or use black-box architectures. NAEs, however, match or exceed the predictive performance of these more complex models, while maintaining clear, feature-level interpretability.

The ``Complex.'' column in the table refers to the number of components required for a full model inspection. For interaction-level GA$^2$M models, the prediction sums univariate functions and pairwise interaction surfaces, meaning a complete inspection requires examining $O(n^2)$ components. In contrast, an NAE prediction always decomposes into exactly one scalar contribution per feature. The primary explanation thus consists of only $O(n)$ shape plots, each displaying the feature's main effect and its associated interaction strength bounds. 
In high-dimensional settings, this reduction from $O(n^2)$ to $O(n)$ explanation complexity offers a significant interpretability advantage.

Figure \ref{fig:explanation} demonstrates the interpretability of NAEs by comparing the effect of the Longitude feature on house price predictions across models. EBM, NAM, and NAE all capture similar geographic trends, with higher property values near key longitudes (e.g., -122.5 for San Francisco, -118.5 for Los Angeles). NAE further reveals a wide range of possible outcomes between -120 and -119, with most data points near the lower bound. Its Longitude-Latitude interaction plot confirms that coastal regions positively influence house prices, while other areas in the same longitude range may have a negative effect.

\subsection{Controlling the Trade-off Between Accuracy and Interpretability} \label{sec:balance}

As demonstrated by the theoretical analysis and experiments on the simulated data, NAEs provide a tunable trade-off between flexibility and additivity via the expert variation penalty parameter $\lambda$. Figure \ref{fig:housing_lambda} shows shape plots for the Longitude feature with different $\lambda$ values. As $\lambda$ increases, the bounds on feature effects tighten, enforcing additivity but potentially reducing flexibility.

\begin{figure}[h!]
\centering
\includegraphics[width=1.0\linewidth]{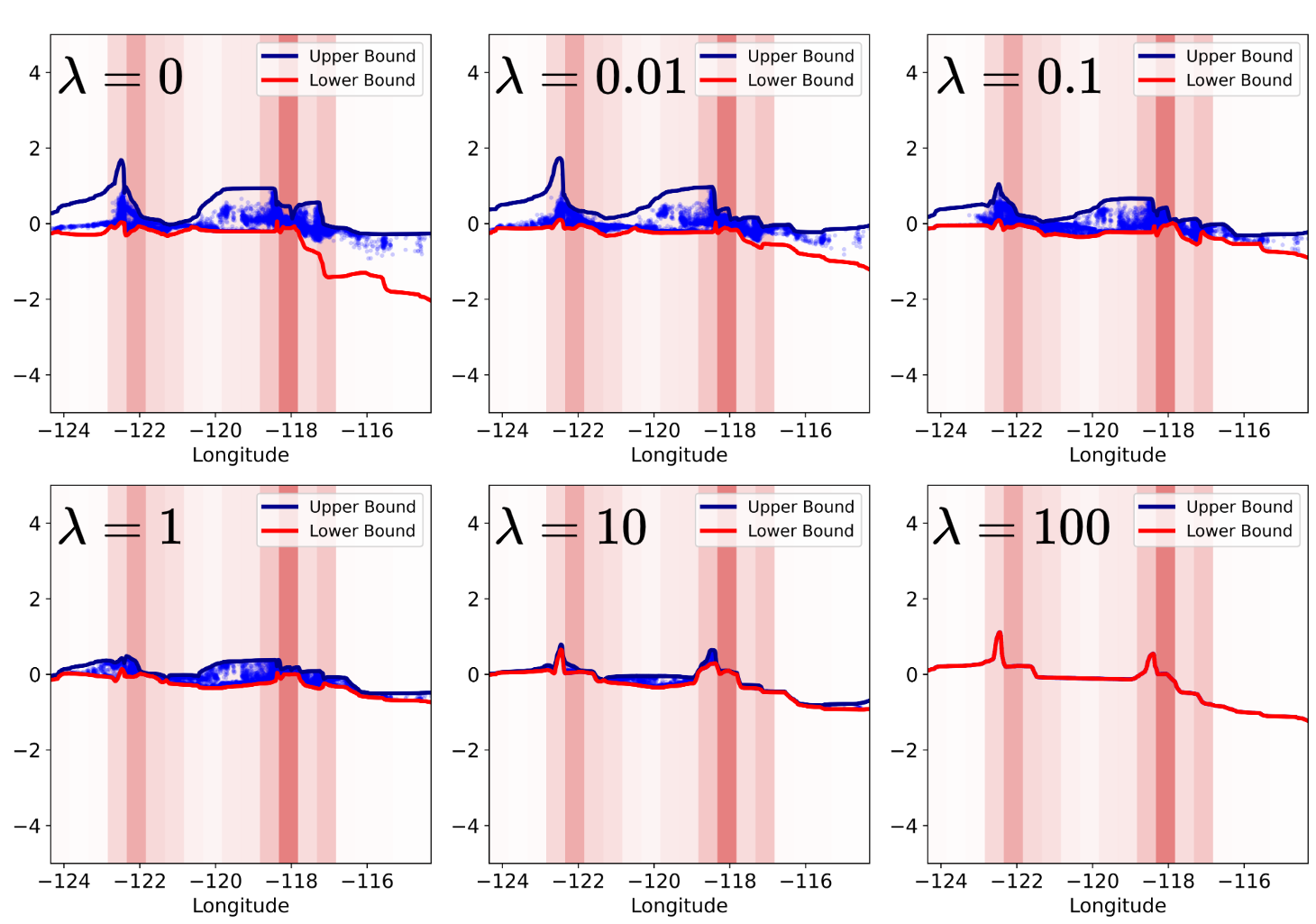}
\caption{Shape plots of the Longitude feature learned by NAEs with different $\lambda$ values. Higher $\lambda$ enforces additivity and narrows the range of possible feature effects.}
\label{fig:housing_lambda}
\end{figure}

To quantify this trade-off, we report both the additivity metric from Equation~\eqref{eq:additivity} and the RMSE of the trained NAEs. Additionally, we introduce a tightness metric to assess how closely the predicted bounds capture the estimated feature effects, reflecting the variability in the model's predictions. The tightness metric is defined as follows:
\begin{equation}
\text{Tightness} = \frac{1}{n}\sum_{i=1}^n \mathbb{E}\Big[\frac{\max(o_i|x_i) - \min(o_i|x_i) + \delta}{\text{upper}(o_i|x_i) - \text{lower}(o_i|x_i) + \delta}\Big].
\end{equation}
A high tightness value indicates that the predicted values closely span the estimated bounds, while a low value suggests the bounds are overly loose.

\begin{table}[h!]\scriptsize
\centering
\caption{Quantitative analysis of additivity and bound tightness on the Housing dataset for various \(\lambda\) settings, along with corresponding performance (RMSE).
}
\begin{center}
\begin{tabular}{cccccccccccc}
\toprule
\bf $\lambda=$ 0 & \bf $\lambda=$ 0.01 & \bf $\lambda=$ 0.1 & \bf $\lambda=$ 1 & \bf $\lambda=$ 10 & \bf $\lambda=$ 100 \\
\midrule
\multicolumn{6}{c}{Additivity} \\
\midrule
\makecell{0.522\\\scriptsize$\pm$0.000} & \makecell{0.537\\\scriptsize$\pm$0.000} & \makecell{0.562\\\scriptsize$\pm$0.012} & \makecell{0.639\\\scriptsize$\pm$0.000} & \makecell{0.897\\\scriptsize$\pm$0.000} & \makecell{1.000\\\scriptsize$\pm$0.000} \\
\midrule
\multicolumn{6}{c}{Tightness}\\
\midrule
\makecell{0.860\\\scriptsize$\pm$0.000} & \makecell{0.895\\\scriptsize$\pm$0.018} & \makecell{0.953\\\scriptsize$\pm$0.011} & \makecell{0.988\\\scriptsize$\pm$0.000} & \makecell{0.999\\\scriptsize$\pm$0.000} & \makecell{1.000\\\scriptsize$\pm$0.000}  \\      
\midrule
\multicolumn{6}{c}{RMSE} \\
\midrule
\makecell{0.451\\\scriptsize$\pm$0.003} & \makecell{0.450\\\scriptsize$\pm$0.002} & \makecell{0.451\\\scriptsize$\pm$0.003} & \makecell{0.458\\\scriptsize$\pm$0.003} & \makecell{0.515\\\scriptsize$\pm$0.008} & \makecell{0.582\\\scriptsize$\pm$0.008}  \\
\bottomrule
\end{tabular} \label{tab:additivity_tightness}
\end{center}
\end{table}

Table \ref{tab:additivity_tightness} summarizes additivity, tightness, and RMSE on the Housing dataset for different~$\lambda$ values. As~$\lambda$ increases, additivity and tightness improve, but predictive performance (RMSE) may degrade. This confirms that~$\lambda$ provides a practical mechanism for balancing accuracy and interpretability in NAEs. 
In our main experiments, we use $\lambda=0.1$ uniformly across datasets, as it offers a good balance between flexibility and additivity. 
For practical use, we suggest starting with $\lambda=0.1$ and increasing it if greater interpretability is needed, monitoring both validation accuracy and the additivity metric.
\section{RELATED WORK}

\subsection{Generalized Additive Models}

Generalized Additive Models (GAMs) are foundational in interpretable machine learning, modeling the response as a sum of univariate functions for clear, feature-level attributions \citep{hastie2017generalized}. This additive structure enables visualization and understanding of individual feature effects, making GAMs valuable in transparency-critical domains. Extensions include hierarchical modeling \citep{pedersen2019hierarchical}, robust evaluation \citep{hegselmann2020evaluation,chang2021interpretable}, and higher-order interactions \citep{enouen2022sparse,tan2018learning,duong2024cat}. Recent advances such as Explainable Boosting Machines (EBMs) \citep{lou2012intelligible}, Neural Additive Models (NAMs) \citep{agarwal2021neural}, NODE-GAM \citep{chang2022node}, and ProtoNAM \citep{xiong2025protonam} improve flexibility and accuracy while preserving interpretability. Further developments, including Neural Basis Models (NBMs) \citep{radenovic2022neural} and Gaussian Process Neural Additive Models (GP-NAMs) \citep{zhang2024gaussian}, enhance parameter efficiency, scalability, and uncertainty quantification \citep{bouchiat2023improving,thielmann2024neural}. However, the strictly additive nature of GAMs limits their ability to capture complex, context-dependent feature interactions \citep{cui2020factored,mueller2024gamformer,ibrahim2023grand}.

Previous work has attempted to address this limitation by introducing models such as GA$^2$M, where interactions are also captured \citep{lou2013accurate}. GA$^2$M primarily relies on 2D interaction plots for interpretability, which can become computationally expensive and challenging to visualize as the number of features increases. 
Also, multiple interaction plots are often required to fully understand the effect of a single feature in GA$^2$M, which complicates interpretation.
While post-hoc heterogeneity analysis methods were proposed to estimate the single feature effect in GA$^2$M without visualizing all interactions \citep{goldstein2015peeking,gkolemis2023rhale}, their explanations are empirical variability bands based on given samples. Although such bands can be informative, they do not guarantee coverage for unseen samples. In contrast, NAEs provide architectural (model-intrinsic) bounds on feature effects defined by the minimum and maximum outputs of the learned expert functions, ensuring coverage for any context and input, including those not seen during training.

Beyond GAMs, there are also post-hoc additive explanation methods like additive SHAP \citep{lundberg2017unified}, which decompose predictions of complex models into additive contributions. However, these methods provide approximations of the underlying black-box model's behavior rather than inherently interpretable architectures, and they can be computationally expensive for large models.
In contrast, NAEs build additivity directly into the architecture and training objective. Because the model is defined as a sum of per-feature expert contributions, the resulting explanations are exact by construction rather than approximations of a more complex underlying function.
Furthermore, unlike post-hoc methods that can also be applied to trained models, NAEs allow explicit control over the models during training via the expert-variation penalty $\lambda$. This parameter governs the deviation from strict additivity, enabling a direct trade-off between expressivity and interpretability. Additionally, NAE feature attributions are inherent to the forward pass, avoiding the computational overhead of estimating SHAP values for complex models.

\subsection{Mixture of Experts}

The Mixture of Experts (MoE) paradigm introduces a modular approach to modeling, where multiple specialized sub-models (experts) are trained and a gating network dynamically selects or combines their outputs \citep{jacobs1991adaptive}. MoE architectures have demonstrated remarkable scalability and adaptability, particularly in large-scale neural networks. Notably, sparse gating mechanisms have enabled efficient training of massive models by activating only a subset of experts per input \citep{shazeer2017outrageously}. MoE has been successfully integrated into Transformer architectures \citep{vaswani2017attention}, powering advances in large language models \citep{artetxe2022efficient,jiang2024mixtral,lepikhin2020gshard} and multimodal systems \citep{lin2024moe,shen2023scaling}. 

While MoE models excel at capturing diverse patterns and scaling to large datasets, they typically lack the interpretability of additive models.
Specifically, standard MoE architectures employ experts that operate on the full input vector, with a gating network combining their outputs into a single prediction. While flexible, this approach sacrifices the decomposability required for interpretability. In contrast, NAEs organize experts per feature, ensuring that the final prediction remains a sum of feature-wise contributions (Equation \eqref{eq:final_prediction}) even when the gating network utilizes all features. Similarly, unlike general modular networks which focus on functional specialization without enforcing additive constraints, NAEs enforce modularization specifically at the feature level. This design, combined with our expert-variation penalty, allows NAEs to function as a structured, interpretable variant of MoE that retains GAM-like additivity while offering explicit control over interaction strength.

\subsection{Integrating Additive Models with Mixture of Experts}

Several efforts have sought to combine the interpretability of GAMs with the flexibility of MoE architectures, but important limitations remain. The Generalized Additive Models for Location, Scale, and Shape (GAMLSS) framework \citep{rigby2005generalized} extends GAMs to model multiple distributional parameters (e.g., mean and variance) in an additive manner, increasing flexibility but still failing to capture complex feature interactions. FlexMix \citep{leisch2004flexmix} introduces a mixture model of GAMs with expert weights estimated via Expectation-Maximization, yet these weights are fixed after training and cannot adapt to new inputs, limiting expressiveness. Mixdistreg \citep{rugamer2023mixdistreg,rugamer2024mixture} and MNAM \citep{kim2024generalizing} propose a dynamic mixture of additive models with input-dependent weights, but apply the same set of weights uniformly across all features or experts, preventing feature-specific adaptation and lacking explicit mechanisms to balance interpretability and accuracy.

In summary, while prior work has made progress toward integrating interpretability and accuracy, existing methods either retain strict additivity or lack fine-grained control over the trade-off between accuracy and transparency. This motivates the development of new frameworks, such as NAEs, that provide both feature-level interpretability and adaptive modeling capacity. By enabling context-dependent expert selection and feature-specific adaptation, our model bridges the gap between transparent, interpretable models and highly flexible, accurate architectures, opening new possibilities for trustworthy machine learning in complex domains.

\section{CONCLUSION}

We introduced Neural Additive Experts (NAEs), a novel framework that balances interpretability and predictive accuracy by extending additive models with a mixture-of-experts architecture and dynamic gating. NAEs enable each feature to be modeled by multiple specialized networks, with a gating mechanism that adaptively integrates information across features, thus relaxing strict additive constraints. Our theoretical analysis demonstrates that NAEs can recover complex, context-dependent relationships that are inaccessible to standard GAMs, while targeted regularization provides explicit control over the trade-off between flexibility and transparency. Empirical results on both synthetic and real-world datasets show that NAEs achieve competitive accuracy with state-of-the-art models, while maintaining clear, feature-level explanations. This work provides a principled and practical approach for applications where both interpretability and accuracy are essential.

\subsubsection*{Acknowledgements}
This work is supported in part by the US National Science Foundation (NSF) and the National Institute of Health (NIH) under grants IIS-2106913, IIS-2538206, IIS-2529378, CCF-2217071, CNS-2213700, and R01LM014012-01A1. Any recommendations expressed in this material are those of the authors and do not necessarily reflect the views of NIH or NSF.

\bibliographystyle{apalike}

\bibliography{reference}

@article{agarwal2021neural,
  title={Neural additive models: Interpretable machine learning with neural nets},
  author={Agarwal, Rishabh and Melnick, Levi and Frosst, Nicholas and Zhang, Xuezhou and Lengerich, Ben and Caruana, Rich and Hinton, Geoffrey E},
  journal={Advances in Neural Information Processing Systems},
  volume={34},
  pages={4699--4711},
  year={2021}
}

@inproceedings{chang2022node,
title={{NODE}-{GAM}: Neural Generalized Additive Model for Interpretable Deep Learning},
author={Chun-Hao Chang and Rich Caruana and Anna Goldenberg},
booktitle={International Conference on Learning Representations},
year={2022},
url={https://openreview.net/forum?id=g8NJR6fCCl8}
}

@inproceedings{radenovic2022neural,
 author = {Radenovic, Filip and Dubey, Abhimanyu and Mahajan, Dhruv},
 booktitle = {Advances in Neural Information Processing Systems},
 editor = {S. Koyejo and S. Mohamed and A. Agarwal and D. Belgrave and K. Cho and A. Oh},
 pages = {8414--8426},
 publisher = {Curran Associates, Inc.},
 title = {Neural Basis Models for Interpretability},
 volume = {35},
 year = {2022}
}

@inproceedings{popov2020neural,
title={Neural Oblivious Decision Ensembles for Deep Learning on Tabular Data},
author={Sergei Popov and Stanislav Morozov and Artem Babenko},
booktitle={International Conference on Learning Representations},
year={2020},
url={https://openreview.net/forum?id=r1eiu2VtwH}
}

@inproceedings{lou2012intelligible,
  title={Intelligible models for classification and regression},
  author={Lou, Yin and Caruana, Rich and Gehrke, Johannes},
  booktitle={Proceedings of the 18th ACM SIGKDD international conference on Knowledge discovery and data mining},
  pages={150--158},
  year={2012}
}

@article{nori2019interpretml,
  title={Interpretml: A unified framework for machine learning interpretability},
  author={Nori, Harsha and Jenkins, Samuel and Koch, Paul and Caruana, Rich},
  journal={arXiv preprint arXiv:1909.09223},
  year={2019}
}

@inproceedings{chang2021interpretable,
  title={How interpretable and trustworthy are GAMs?},
  author={Chang, Chun-Hao and Tan, Sarah and Lengerich, Ben and Goldenberg, Anna and Caruana, Rich},
  booktitle={Proceedings of the 27th ACM SIGKDD Conference on Knowledge Discovery \& Data Mining},
  pages={95--105},
  year={2021}
}

@article{vaswani2017attention,
  title={Attention is all you need},
  author={Vaswani, Ashish and Shazeer, Noam and Parmar, Niki and Uszkoreit, Jakob and Jones, Llion and Gomez, Aidan N and Kaiser, {\L}ukasz and Polosukhin, Illia},
  journal={Advances in neural information processing systems},
  volume={30},
  year={2017}
}

@incollection{hastie2017generalized,
  title={Generalized additive models},
  author={Hastie, Trevor J},
  booktitle={Statistical models in S},
  pages={249--307},
  year={2017},
  publisher={Routledge}
}

@inproceedings{chen2016xgboost,
  title={Xgboost: A scalable tree boosting system},
  author={Chen, Tianqi and Guestrin, Carlos},
  booktitle={Proceedings of the 22nd acm sigkdd international conference on knowledge discovery and data mining},
  pages={785--794},
  year={2016}
}

@article{saeed2011multiparameter,
  title={Multiparameter Intelligent Monitoring in Intensive Care II (MIMIC-II): a public-access intensive care unit database},
  author={Saeed, Mohammed and Villarroel, Mauricio and Reisner, Andrew T and Clifford, Gari and Lehman, Li-Wei and Moody, George and Heldt, Thomas and Kyaw, Tin H and Moody, Benjamin and Mark, Roger G},
  journal={Critical care medicine},
  volume={39},
  number={5},
  pages={952},
  year={2011},
  publisher={NIH Public Access}
}

@article{johnson2016mimic,
  title={MIMIC-III, a freely accessible critical care database},
  author={Johnson, Alistair EW and Pollard, Tom J and Shen, Lu and Lehman, Li-wei H and Feng, Mengling and Ghassemi, Mohammad and Moody, Benjamin and Szolovits, Peter and Anthony Celi, Leo and Mark, Roger G},
  journal={Scientific data},
  volume={3},
  number={1},
  pages={1--9},
  year={2016},
  publisher={Nature Publishing Group}
}

@article{blake1998uci,
  title={UCI repository of machine learning databases},
  author={Blake, Catherine},
  journal={http://www. ics. uci. edu/\~{} mlearn/MLRepository. html},
  year={1998},
  publisher={University of California, Department of Information and Computer Science}
}

@article{pace1997sparse,
  title={Sparse spatial autoregressions},
  author={Pace, R Kelley and Barry, Ronald},
  journal={Statistics \& Probability Letters},
  volume={33},
  number={3},
  pages={291--297},
  year={1997},
  publisher={Elsevier}
}

@article{jacobs1991adaptive,
  title={Adaptive mixtures of local experts},
  author={Jacobs, Robert A and Jordan, Michael I and Nowlan, Steven J and Hinton, Geoffrey E},
  journal={Neural computation},
  volume={3},
  number={1},
  pages={79--87},
  year={1991},
  publisher={MIT Press}
}

@inproceedings{shazeer2017outrageously,
title={ Outrageously Large Neural Networks: The Sparsely-Gated Mixture-of-Experts Layer},
author={Noam Shazeer and *Azalia Mirhoseini and *Krzysztof Maziarz and Andy Davis and Quoc Le and Geoffrey Hinton and Jeff Dean},
booktitle={International Conference on Learning Representations},
year={2017},
url={https://openreview.net/forum?id=B1ckMDqlg}
}

@inproceedings{lepikhin2020gshard,
title={{\{}GS{\}}hard: Scaling Giant Models with Conditional Computation and Automatic Sharding},
author={Dmitry Lepikhin and HyoukJoong Lee and Yuanzhong Xu and Dehao Chen and Orhan Firat and Yanping Huang and Maxim Krikun and Noam Shazeer and Zhifeng Chen},
booktitle={International Conference on Learning Representations},
year={2021},
url={https://openreview.net/forum?id=qrwe7XHTmYb}
}

@article{jiang2024mixtral,
  title={Mixtral of experts},
  author={Albert Q. Jiang and Alexandre Sablayrolles and Antoine Roux and Arthur Mensch and Blanche Savary and Chris Bamford and Devendra Singh Chaplot and Diego de las Casas and Emma Bou Hanna and Florian Bressand and Gianna Lengyel and Guillaume Bour and Guillaume Lample and Lélio Renard Lavaud and Lucile Saulnier and Marie-Anne Lachaux and Pierre Stock and Sandeep Subramanian and Sophia Yang and Szymon Antoniak and Teven Le Scao and Théophile Gervet and Thibaut Lavril and Thomas Wang and Timothée Lacroix and William El Sayed},
  journal={arXiv preprint arXiv:2401.04088},
  year={2024}
}

@inproceedings{artetxe2022efficient,
    title = "Efficient Large Scale Language Modeling with Mixtures of Experts",
    author = "Artetxe, Mikel  and
      Bhosale, Shruti  and
      Goyal, Naman  and
      Mihaylov, Todor  and
      Ott, Myle  and
      Shleifer, Sam  and
      Lin, Xi Victoria  and
      Du, Jingfei  and
      Iyer, Srinivasan  and
      Pasunuru, Ramakanth  and
      Anantharaman, Giridharan  and
      Li, Xian  and
      Chen, Shuohui  and
      Akin, Halil  and
      Baines, Mandeep  and
      Martin, Louis  and
      Zhou, Xing  and
      Koura, Punit Singh  and
      O{'}Horo, Brian  and
      Wang, Jeffrey  and
      Zettlemoyer, Luke  and
      Diab, Mona  and
      Kozareva, Zornitsa  and
      Stoyanov, Veselin",
    editor = "Goldberg, Yoav  and
      Kozareva, Zornitsa  and
      Zhang, Yue",
    booktitle = "Proceedings of the 2022 Conference on Empirical Methods in Natural Language Processing",
    month = dec,
    year = "2022",
    address = "Abu Dhabi, United Arab Emirates",
    publisher = "Association for Computational Linguistics",
    url = "https://aclanthology.org/2022.emnlp-main.804/",
    doi = "10.18653/v1/2022.emnlp-main.804",
    pages = "11699--11732",
}

@inproceedings{shen2023scaling,
  title={Scaling Vision-Language Models with Sparse Mixture of Experts},
  author={Shen, Sheng and Yao, Zhewei and Li, Chunyuan and Darrell, Trevor and Keutzer, Kurt and He, Yuxiong},
  booktitle={Findings of the Association for Computational Linguistics: EMNLP 2023},
  pages={11329--11344},
  year={2023}
}

@article{lin2024moe,
  title={Moe-llava: Mixture of experts for large vision-language models},
  author={Lin, Bin and Tang, Zhenyu and Ye, Yang and Cui, Jiaxi and Zhu, Bin and Jin, Peng and Zhang, Junwu and Ning, Munan and Yuan, Li},
  journal={arXiv preprint arXiv:2401.15947},
  year={2024}
}

@inproceedings{jang2016categorical,
  title={Categorical Reparameterization with Gumbel-Softmax},
  author={Jang, Eric and Gu, Shixiang and Poole, Ben},
  booktitle={International Conference on Learning Representations},
  year={2016}
}

@article{zhang2024gaussian, title={Gaussian Process Neural Additive Models}, volume={38}, url={https://ojs.aaai.org/index.php/AAAI/article/view/29628}, DOI={10.1609/aaai.v38i15.29628}, abstractNote={Deep neural networks have revolutionized many fields, but their black-box nature also occasionally prevents their wider adoption in fields such as healthcare and finance where interpretable and explainable models are required. The recent development of Neural Additive Models (NAMs) poses a major step in the direction of interpretable deep learning for tabular datasets. In this paper, we propose a new subclass of NAMs that utilize a single-layer neural network construction of the Gaussian process via random Fourier features, which we call Gaussian Process Neural Additive Models (GP-NAM). GP-NAMs have the advantage of a convex objective function and number of trainable parameters that grows linearly with feature dimensions. It suffers no loss in performance compared with deeper NAM approaches because GPs are well-suited to learning complex non-parametric univariate functions. We demonstrate the performance of GP-NAM on several tabular datasets, showing that it achieves comparable performance in both classification and regression tasks with a massive reduction in the number of parameters.}, number={15}, journal={Proceedings of the AAAI Conference on Artificial Intelligence}, author={Zhang, Wei and Barr, Brian and Paisley, John}, year={2024}, month={Mar.}, pages={16865-16872} }

@inproceedings{thielmann2024neural,
  title={Neural additive models for location scale and shape: A framework for interpretable neural regression beyond the mean},
  author={Thielmann, Anton Frederik and Kruse, Ren{\'e}-Marcel and Kneib, Thomas and S{\"a}fken, Benjamin},
  booktitle={International Conference on Artificial Intelligence and Statistics},
  pages={1783--1791},
  year={2024},
  organization={PMLR}
}

@article{bouchiat2023improving,
  title={Improving Neural Additive Models with Bayesian Principles},
  author={Bouchiat, Kouroche and Immer, Alexander and Y{\`e}che, Hugo and R{\"a}tsch, Gunnar and Fortuin, Vincent},
  year={2023}
}

@inproceedings{loshchilov2018decoupled,
  title={Decoupled Weight Decay Regularization},
  author={Loshchilov, Ilya and Hutter, Frank},
  booktitle={International Conference on Learning Representations},
  year={2018}
}

@article{rigby2005generalized,
  title={Generalized additive models for location, scale and shape},
  author={Rigby, Robert A and Stasinopoulos, D Mikis},
  journal={Journal of the Royal Statistical Society Series C: Applied Statistics},
  volume={54},
  number={3},
  pages={507--554},
  year={2005},
  publisher={Oxford University Press}
}

@article{leisch2004flexmix,
  title={Flexmix: A general framework for finite mixture models and latent glass regression in R},
  author={Leisch, Friedrich},
  year={2004}
}

@article{rugamer2023mixdistreg,
  title={mixdistreg: An R Package for Fitting Mixture of Experts Distributional Regression with Adaptive First-order Methods},
  author={R{\"u}gamer, David},
  journal={arXiv preprint arXiv:2302.02043},
  year={2023}
}

@article{rugamer2024mixture,
  title={Mixture of experts distributional regression: implementation using robust estimation with adaptive first-order methods},
  author={R{\"u}gamer, David and Pfisterer, Florian and Bischl, Bernd and Gr{\"u}n, Bettina},
  journal={AStA Advances in Statistical Analysis},
  volume={108},
  number={2},
  pages={351--373},
  year={2024},
  publisher={Springer}
}

@article{enouen2022sparse,
  title={Sparse interaction additive networks via feature interaction detection and sparse selection},
  author={Enouen, James and Liu, Yan},
  journal={Advances in Neural Information Processing Systems},
  volume={35},
  pages={13908--13920},
  year={2022}
}

@inproceedings{hegselmann2020evaluation,
  title={An evaluation of the doctor-interpretability of generalized additive models with interactions},
  author={Hegselmann, Stefan and Volkert, Thomas and Ohlenburg, Hendrik and Gottschalk, Antje and Dugas, Martin and Ertmer, Christian},
  booktitle={Machine Learning for Healthcare Conference},
  pages={46--79},
  year={2020},
  organization={PMLR}
}

@article{pedersen2019hierarchical,
  title={Hierarchical generalized additive models in ecology: an introduction with mgcv},
  author={Pedersen, Eric J and Miller, David L and Simpson, Gavin L and Ross, Noam},
  journal={PeerJ},
  volume={7},
  pages={e6876},
  year={2019},
  publisher={PeerJ Inc.}
}

@article{tan2018learning,
  title={Learning global additive explanations for neural nets using model distillation},
  author={Tan, Sarah and Caruana, Rich and Hooker, Giles and Koch, Paul and Gordo, Albert},
  journal={stat},
  volume={1050},
  pages={3},
  year={2018}
}

@inproceedings{duong2024cat,
  title={Cat: Interpretable concept-based taylor additive models},
  author={Duong, Viet and Wu, Qiong and Zhou, Zhengyi and Zhao, Hongjue and Luo, Chenxiang and Zavesky, Eric and Yao, Huaxiu and Shao, Huajie},
  booktitle={Proceedings of the 30th ACM SIGKDD Conference on Knowledge Discovery and Data Mining},
  pages={723--734},
  year={2024}
}

@article{miller2019explanation,
  title={Explanation in artificial intelligence: Insights from the social sciences},
  author={Miller, Tim},
  journal={Artificial intelligence},
  volume={267},
  pages={1--38},
  year={2019},
  publisher={Elsevier}
}

@inproceedings{lou2013accurate,
  title={Accurate intelligible models with pairwise interactions},
  author={Lou, Yin and Caruana, Rich and Gehrke, Johannes and Hooker, Giles},
  booktitle={Proceedings of the 19th ACM SIGKDD international conference on Knowledge discovery and data mining},
  pages={623--631},
  year={2013}
}

@article{kim2024generalizing,
  title={Generalizing Neural Additive Models via Statistical Multimodal Analysis},
  author={Kim, Young Kyung and Di Martino, Juan Matias and Sapiro, Guillermo},
  journal={Transactions on Machine Learning Research},
  year={2024}
}

@article{mueller2024gamformer,
  title={GAMformer: In-Context Learning for Generalized Additive Models},
  author={Mueller, Andreas and Siems, Julien and Nori, Harsha and Salinas, David and Zela, Arber and Caruana, Rich and Hutter, Frank},
  journal={arXiv preprint arXiv:2410.04560},
  year={2024}
}

@inproceedings{cui2020factored,
  title={A factored generalized additive model for clinical decision support in the operating room},
  author={Cui, Zhicheng and Fritz, Bradley A and King, Christopher R and Avidan, Michael S and Chen, Yixin},
  booktitle={AMIA Annual Symposium Proceedings},
  volume={2019},
  pages={343},
  year={2020}
}

@article{ibrahim2023grand,
  title={GRAND-SLAMIN’Interpretable Additive Modeling with Structural Constraints},
  author={Ibrahim, Shibal and Afriat, Gabriel and Behdin, Kayhan and Mazumder, Rahul},
  journal={Advances in Neural Information Processing Systems},
  volume={36},
  pages={61158--61186},
  year={2023}
}

@article{ibrahim2025predicting,
  title={Predicting census survey response rates with parsimonious additive models and structured interactions},
  author={Ibrahim, Shibal and Radchenko, Peter and Ben-David, Emanuel and Mazumder, Rahul},
  journal={The Annals of Applied Statistics},
  volume={19},
  number={1},
  pages={94--120},
  year={2025},
  publisher={Institute of Mathematical Statistics}
}

@article{mcintosh2025generalized,
  title={Generalized additive logit models for clinical prediction},
  author={McIntosh, Cameron Norman},
  journal={Blood Advances},
  volume={9},
  number={4},
  pages={935--936},
  year={2025},
  publisher={American Society of Hematology Washington, DC}
}

@inproceedings{biran2017explanation,
  title={Explanation and justification in machine learning: A survey},
  author={Biran, Or and Cotton, Courtenay},
  booktitle={IJCAI-17 workshop on explainable AI (XAI)},
  volume={8},
  number={1},
  pages={8--13},
  year={2017}
}

@misc{ruppert2004elements,
  title={The elements of statistical learning: data mining, inference, and prediction},
  author={Ruppert, David},
  year={2004},
  publisher={Taylor \& Francis}
}

@inproceedings{caruana2015intelligible,
  title={Intelligible models for healthcare: Predicting pneumonia risk and hospital 30-day readmission},
  author={Caruana, Rich and Lou, Yin and Gehrke, Johannes and Koch, Paul and Sturm, Marc and Elhadad, Noemie},
  booktitle={Proceedings of the 21th ACM SIGKDD international conference on knowledge discovery and data mining},
  pages={1721--1730},
  year={2015}
}

@article{christoph2020interpretable,
  title={Interpretable machine learning: A guide for making black box models explainable},
  author={Christoph, Molnar},
  year={2020},
  publisher={Leanpub}
}

@inproceedings{lundberg2017unified,
 author = {Lundberg, Scott M and Lee, Su-In},
 booktitle = {Advances in Neural Information Processing Systems},
 editor = {I. Guyon and U. Von Luxburg and S. Bengio and H. Wallach and R. Fergus and S. Vishwanathan and R. Garnett},
 pages = {},
 publisher = {Curran Associates, Inc.},
 title = {A Unified Approach to Interpreting Model Predictions},
 url = {https://proceedings.neurips.cc/paper_files/paper/2017/file/8a20a8621978632d76c43dfd28b67767-Paper.pdf},
 volume = {30},
 year = {2017}
}

@article{goldstein2015peeking,
  title={Peeking inside the black box: Visualizing statistical learning with plots of individual conditional expectation},
  author={Goldstein, Alex and Kapelner, Adam and Bleich, Justin and Pitkin, Emil},
  journal={journal of Computational and Graphical Statistics},
  volume={24},
  number={1},
  pages={44--65},
  year={2015},
  publisher={Taylor \& Francis}
}

@inproceedings{gkolemis2023rhale,
  author={Vasilis Gkolemis and Theodore Dalamagas and Eirini Ntoutsi and Christos Diou},
  title={RHALE: Robust and Heterogeneity-Aware Accumulated Local Effects},
  year={2023},
  cdate={1672531200000},
  pages={859-866},
  url={https://doi.org/10.3233/FAIA230354},
  booktitle={ECAI},
}

@article{xiong2025protonam,
author = {Xiong, Guangzhi and Sinha, Sanchit and Zhang, Aidong},
title = {ProtoNAM: Prototypical Neural Additive Models for Interpretable Deep Tabular Learning},
year = {2025},
issue_date = {November 2025},
publisher = {Association for Computing Machinery},
address = {New York, NY, USA},
volume = {19},
number = {9},
issn = {1556-4681},
url = {https://doi.org/10.1145/3766072},
doi = {10.1145/3766072},
journal = {ACM Trans. Knowl. Discov. Data},
month = oct,
articleno = {163},
numpages = {25}
}

%%%%%%%%%%%%%%%%%%%%%%%%%%%%%%%%%%%%%%%%%%%%%%%%%%%%%%%%%%%%

\section*{Checklist}

\begin{enumerate}

  \item For all models and algorithms presented, check if you include:
  \begin{enumerate}
    \item A clear description of the mathematical setting, assumptions, algorithm, and/or model. [Yes]
    \item An analysis of the properties and complexity (time, space, sample size) of any algorithm. [Yes]
    \item (Optional) Anonymized source code, with specification of all dependencies, including external libraries. [Yes]
  \end{enumerate}

  \item For any theoretical claim, check if you include:
  \begin{enumerate}
    \item Statements of the full set of assumptions of all theoretical results. [Yes]
    \item Complete proofs of all theoretical results. [Yes]
    \item Clear explanations of any assumptions. [Yes]     
  \end{enumerate}

  \item For all figures and tables that present empirical results, check if you include:
  \begin{enumerate}
    \item The code, data, and instructions needed to reproduce the main experimental results (either in the supplemental material or as a URL). [Yes]
    \item All the training details (e.g., data splits, hyperparameters, how they were chosen). [Yes]
    \item A clear definition of the specific measure or statistics and error bars (e.g., with respect to the random seed after running experiments multiple times). [Yes]
    \item A description of the computing infrastructure used. (e.g., type of GPUs, internal cluster, or cloud provider). [Yes]
  \end{enumerate}

  \item If you are using existing assets (e.g., code, data, models) or curating/releasing new assets, check if you include:
  \begin{enumerate}
    \item Citations of the creator If your work uses existing assets. [Yes]
    \item The license information of the assets, if applicable. [Yes]
    \item New assets either in the supplemental material or as a URL, if applicable. [Yes]
    \item Information about consent from data providers/curators. [Not Applicable]
    \item Discussion of sensible content if applicable, e.g., personally identifiable information or offensive content. [Not Applicable]
  \end{enumerate}

  \item If you used crowdsourcing or conducted research with human subjects, check if you include:
  \begin{enumerate}
    \item The full text of instructions given to participants and screenshots. [Not Applicable]
    \item Descriptions of potential participant risks, with links to Institutional Review Board (IRB) approvals if applicable. [Not Applicable]
    \item The estimated hourly wage paid to participants and the total amount spent on participant compensation. [Not Applicable]
  \end{enumerate}

\end{enumerate}

\clearpage
\appendix
\thispagestyle{empty}

% Supplementary material: To improve readability, you must use a single-column format for the supplementary material.
\onecolumn
\aistatstitle{Context-Gated Experts for Controllable Model Additivity: \\
Supplementary Materials}

\section{DATASET DESCRIPTION} \label{sec:data_desc}

\paragraph{Housing Dataset} The California Housing dataset, cited from \citet{pace1997sparse}, encompasses a regression task based on median housing prices across California's census blocks. It comprises 20,640 instances, each characterized by eight distinct attributes.

\paragraph{MIMIC-II Dataset} The MIMIC-II (Multiparameter Intelligent Monitoring in Intensive Care) dataset, as described by \citet{saeed2011multiparameter}, facilitates a binary classification task aimed at predicting mortality in intensive care units (ICUs). It contains 17 attributes, including seven categorical variables.

\paragraph{MIMIC-III Dataset} MIMIC-III is a more extensive and detailed iteration of the MIMIC database referenced from \citet{johnson2016mimic}. In our study, the same setting in NODE-GAM \citep{chang2022node} is adopted, wherein categorical variables have been transformed into dummy variables for enhanced analysis.

\paragraph{Income Dataset} Originating from the UCI Machine Learning Repository \citep{blake1998uci}, the Income dataset underpins a binary classification task. Its objective is to predict whether an individual earns more than \$50,000 annually.

\paragraph{Credit Dataset} The Credit dataset\footnote{The Credit dataset can be downloaded at \url{https://www.kaggle.com/datasets/mlg-ulb/creditcardfraud}.} provides samples for a binary classification task on transaction fraud detection. It contains 30 anonymized features including 28 coefficients of PCA components. In Credit, 492 out of 284,807 transactions are labeled as frauds.

\paragraph{Year Dataset} The Year dataset\footnote{The Year dataset can be downloaded at \url{https://archive.ics.uci.edu/dataset/203/yearpredictionmsd}} contains data for a regression task, which uses the audio features to predict the release year of a song. It includes 515,345 samples with 90 features.

The statistics of each dataset can be found in Table \ref{tab:data_stat}.

\begin{table}[h!] \small
    \centering
    \caption{Statistics of real-world datasets used in model evaluation.}
    \begin{center}
    \begin{tabular}{lccccc}
        \toprule
        \bf Dataset & \bf Task & \bf \#Instances & \bf \#Features & \bf \#Categorical Features & \bf \#Classes\\
        \midrule
        Housing & regression & 20,640 & 8 & 0 & -- \\
        MIMIC-II & classification & 24,508 & 17 & 7 & 2 \\
        MIMIC-III & classification & 27,348 & 57 & 0 & 2 \\
        Income & classification & 32,561 & 14 & 8 & 2 \\
        Credit & classification & 284,807 & 30 & 0 & 2 \\
        Year & regression & 515,345 & 90 & 0 & -- \\
        \bottomrule
    \end{tabular}
\end{center}
    \label{tab:data_stat}
\end{table}

\section{IMPLEMENTATION DETAILS}
\label{sec:model_implement}

For a comprehensive and equitable evaluation, we employed the officially released codes for baseline models including NODE, NODE-GAM/NODE-GA\(^2\)M, EBM/EB\(^2\)M, and NBM/NB\(^2\)M. We adopted the PyTorch implementation of NAM/NA\(^2\)M by NBM, which has been benchmarked against their model, for a direct comparison with our proposed models. In developing NAEs, we constructed a multi-channel, fully connected network capable of encoding multiple features independently in a parallel manner.

For a fair comparison with existing methods, our experiments employ the processed version of MIMIC-II, MIMIC-III, Income, and Credit from \citet{chang2022node}, where the datasets are split into five parts for a five-fold cross-validation. The Housing dataset and the Year dataset are split into the training, validation, and test sets following setups of previous research \citep{chang2022node,radenovic2022neural}, and each of them is tested with 10 different seeds to examine the variation of predictions. Consistent with previous studies \citep{chang2022node,popov2020neural,radenovic2022neural}, we applied the same quantile transformation to the features. 

To be consistent with existing neural network-based GAMs \citep{agarwal2021neural,radenovic2022neural}, the feature encoders \(f_1,\cdots,f_n\) in NAE are implemented with MLPs. The expert predictors \(g_{11},\cdots,g_{nK}\) are implemented as one linear layer in our experiments.
During training, we decrease the learning rate with cosine annealing following NBM \citep{radenovic2022neural}. AdamW optimizer \citep{loshchilov2018decoupled} is used to optimize the training objective.
All experiments were run on NVIDIA A100 GPUs (40 GB / 80 GB).

\section{HYPERPARAMETERS} \label{sec:app_hyper}
\begin{table}[h!] \small
    \centering
    \caption{Hyperparameters for NAE on all datasets.}
    \begin{center}
    \begin{tabular}{c|ccccccc}
        \toprule
        Hyperparameter & Housing & MIMIC-II & MIMIC-III & Income & Credit & Year \\
        \midrule
        \#layers & 4 & 4 & 4 & 4 & 4 & 4 \\
        Hidden dimension & 128 & 128 & 128 & 128 & 128 & 128 \\
        \#total experts & 4 & 4 & 4 & 4 & 4 & 4 \\
        \#activated experts & 4 & 4 & 4 & 4 & 4 & 4 \\
        Batch size & 2048 & 2048 & 1024 & 2048 & 2048 & 512 \\
        Max iteration & 1000 & 1000 & 500 & 1000 & 150 & 75 \\
        Learning rate & 5.97e-4 & 1.97e-4 & 1.68e-4 & 1.11e-4 & 4.00e-5 & 1.17e-4 \\
        Weight decay & 5.29e-5 & 8.06e-4 & 2.78e-4 & 4.25e-3 & 1.88e-3 & 2.28e-6 \\
        Dropout & 0.1 & 0.0 & 0.1 & 0.0 & 0.3 & 0.1 \\
        Dropout expert & 0.2 & 0.4 & 0.2 & 0.5 & 0.2 & 0.6 \\
        Output Penalty & 1.97e-5 & 3.49e-8 & 4.86e-5 & 3.99e-1 & 5.07e-4 & 1.45e-4 \\
        Variation Penalty & 0.1 & 0.1 & 0.1 & 0.1 & 0.1 & 0.1 \\
        Normalization & layer\_norm & layer\_norm & layer\_norm & layer\_norm & batch\_norm & layer\_norm \\
        \bottomrule
    \end{tabular}
\end{center}
    \label{tab:hyerparams}
\end{table}

We perform random search over hyperparameters in the following ranges:

\begin{itemize}
    \item \#layers: the number of layers in each feature encoder, sampled from \{3, 4\}.
    \item Hidden dimension: the number of nodes in each layer of the feature encoder, sampled from \{64, 128\}.
    \item \#total experts: the number of total experts for each feature, sampled from \{4\}.
    \item \#activated experts: the number of activated experts for each feature, sampled from \{4\}.
    \item Batch size: the number of samples in one batch during training, sampled from \{512, 1024, 2048\}.
    \item Max iteration: the number of iterations for training, sampled from \{75, 150, 500, 1000\}.
    \item Learning rate: the speed of gradient descent, sampled from [1e-6, 1e-1].
    \item Weight decay: the coefficient for the L2 normalization on parameters, sampled from [1e-8, 1e-1].
    \item Dropout: the probability of a parameter in feature encoders being replaced as 0 during training, sampled from \{0.0, 0.1, 0.2, 0.3, 0.4, 0.5, 0.6, 0.7, 0.8, 0.9\}.
    \item Dropout expert: the probability of an expert's output being replaced as 0 during training, sampled from \{0.1, 0.2, 0.3, 0.4, 0.5, 0.6, 0.7, 0.8, 0.9\}.
    \item Output penalty: the coefficient for the L2 normalization on the outputs of features for reducing unimportant ones, sampled from [1e-8, 1e-1].
    \item Variation penalty: the weight for the expert variation penalty, sampled from \{0.1\}.
    \item Normalization: normalization methods used in the network, sampled from \{batch\_norm, layer\_norm\}.
\end{itemize}
The best hyperparameters we found for NAE are shown in Tables \ref{tab:hyerparams}.

\section{PROOFS FOR SECTION \ref{sec:theory}}
\label{app:proofs}

Throughout, each feature domain $\mathcal X_i$ is compact and metrizable, hence $\mathcal X_i$, $\mathcal X_i\times\mathcal X_j$ are compact Hausdorff. All functions are real-valued and continuous unless noted. We use $C(Y)$ for the Banach space of continuous real-valued functions on a compact space $Y$ with the sup norm $\|\cdot\|_\infty$.

\begin{lemma}[Restatement of Lemma \ref{lem:sep}]
\label{app:lem:sep}
Let $f_{ij}\in C(\mathcal X_i\times \mathcal X_j)$. For any $\varepsilon>0$ there exist $M\in\mathbb N$ and functions $\{u_m\in C(\mathcal X_i),\, v_m\in C(\mathcal X_j)\}_{m=1}^M$ such that
\[
\big\| f_{ij} - \sum_{m=1}^M u_m(\cdot)\,v_m(\cdot) \big\|_\infty < \varepsilon .
\]
\end{lemma}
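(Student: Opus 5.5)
The plan is to apply the Stone--Weierstrass theorem to the set of finite sums of products,
\[
\mathcal S=\Big\{ (x_i,x_j)\mapsto \sum_{m=1}^M u_m(x_i)v_m(x_j)\ :\ M\in\mathbb N,\ u_m\in C(\mathcal X_i),\ v_m\in C(\mathcal X_j)\Big\}\subset C(\mathcal X_i\times\mathcal X_j).
\]
The domain $\mathcal X_i\times\mathcal X_j$ is compact Hausdorff, as noted at the start of this appendix. If $\mathcal S$ is shown to be dense in the sup norm, then for each $f_{ij}$ and $\varepsilon>0$ there is an element of $\mathcal S$ within $\varepsilon$. Choosing such an element gives the required $M$ and functions $u_m,v_m$ directly.

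First I will check that $\mathcal S$ is a subalgebra of $C(\mathcal X_i\times\mathcal X_j)$.
\begin{itemize}
\item Each $u(x_i)v(x_j)$ is continuous, being a product of continuous functions composed with the coordinate projections.
\item $\mathcal S$ is closed under addition by concatenating sums, and under scalar multiplication by absorbing the scalar into the $u_m$.
\item $\mathcal S$ is closed under multiplication, since
\[
\Big(\sum_m u_m v_m\Big)\Big(\sum_{m'} u'_{m'} v'_{m'}\Big)=\sum_{m,m'} (u_m u'_{m'})(v_m v'_{m'}),
\]
and $u_m u'_{m'}\in C(\mathcal X_i)$, $v_m v'_{m'}\in C(\mathcal X_j)$.
\item $\mathcal S$ contains the constants: take $M=1$ with $u_1\equiv c$ and $v_1\equiv 1$.
\end{itemize}

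Next I will verify that $\mathcal S$ separates points, which is the only step needing a specific construction. Let $(a,b)\neq(a',b')$ in $\mathcal X_i\times\mathcal X_j$. If $a\neq a'$, use the fact that $\mathcal X_i$ is metrizable with metric $d$ and take $u(x_i)=d(x_i,a)$, $v\equiv 1$. Then $u(a)=0$ while $u(a')>0$, so the points are separated. If $a=a'$, then $b\neq b'$, and the symmetric choice $u\equiv 1$, $v(x_j)=d_j(x_j,b)$ works. Since the paper assumes $\mathcal X_i\subset\mathbb R$, one could simply use the coordinate functions instead; the metric version covers the general compact metrizable case.

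Stone--Weierstrass then gives $\overline{\mathcal S}=C(\mathcal X_i\times\mathcal X_j)$, and the lemma follows. I expect no real obstacle here. The points that need care are confirming that the product of two separable sums is again separable, which the expansion above shows, and using compact Hausdorffness of the product, which holds because finite products of compact metrizable spaces are compact metrizable.
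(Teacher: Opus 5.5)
Your proposal is correct and follows the paper's proof essentially step for step: you form the algebra of finite separable sums, check closure and that it contains the constants, separate points using one non-constant factor with the other factor $\equiv 1$, and conclude by Stone--Weierstrass. Your explicit distance-function choice $u(x_i)=d(x_i,a)$ only fills in the step the paper states as ``choose $u\in C(\mathcal X_i)$ with $u(x_i)\neq u(x_i')$.''
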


\begin{proof}
Consider the set
\[
\mathcal A \;=\; \Big\{\, \sum_{m=1}^M u_m(x_i)\,v_m(x_j) \;:\; M\!\in\!\mathbb N,\ u_m\!\in\! C(\mathcal X_i),\ v_m\!\in\! C(\mathcal X_j)\,\Big\}.
\]
$\mathcal A$ is a subalgebra of $C(\mathcal X_i\times \mathcal X_j)$: it is closed under pointwise addition and multiplication and contains constants (take $u\equiv 1$, $v\equiv 1$). It separates points: given $(x_i,x_j)\neq(x'_i,x'_j)$, either $x_i\neq x'_i$ or $x_j\neq x'_j$. If $x_i\neq x'_i$, choose $u\in C(\mathcal X_i)$ with $u(x_i)\neq u(x'_i)$ and $v\equiv 1$. If $x_j\neq x'_j$, choose $v$ analogously and $u\equiv 1$. By Stone–Weierstrass, the uniform closure of $\mathcal A$ is $C(\mathcal X_i\times \mathcal X_j)$. Thus every $f_{ij}$ can be approximated arbitrarily well by finite sums of products $u_m v_m$.
\end{proof}

\begin{lemma}[Restatement of Lemma \ref{lem:two-expert}]
\label{app:lem:two-expert}
Let $u\in C(\mathcal X_i)$ and $v\in C(\mathcal X_j)$ be bounded. Choose any constant $C>\|v\|_\infty$. Define two experts (for feature $i$)
\[
g_{i,+}\!\circ E_i(x_i) = +C\,u(x_i),\quad
g_{i,-}\!\circ E_i(x_i) = -C\,u(x_i),
\]
and a two-class softmax gate with logits
\[
\phi_i^{(+)}(x) = \alpha(x)-\beta(x_j),\quad
\phi_i^{(-)}(x) = \alpha(x)+\beta(x_j).
\]
\end{lemma}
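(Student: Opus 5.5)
The plan is a direct computation in three steps: (i) evaluate the two-way softmax in closed form, (ii) substitute the chosen $\beta$ to recover the product $u(x_i)v(x_j)$, and (iii) check that the logits can be produced by the gating map of Equation \eqref{eq:aggregation} while depending on $x_j$ alone. I expect step (iii) to be the only real obstacle; steps (i) and (ii) are algebra plus a compactness remark.

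\emph{Step (i): the softmax.} With logits $\phi_i^{(\pm)}=\alpha\mp\beta$, softmax is invariant to adding a common constant, so $\alpha(x)$ cancels. This gives
\[
r_{i,+}=\frac{e^{-\beta}}{e^{-\beta}+e^{\beta}}=\sigma(-2\beta),\qquad r_{i,-}=\sigma(2\beta).
\]
Hence $r_{i,+}-r_{i,-}=(e^{-\beta}-e^{\beta})/(e^{-\beta}+e^{\beta})=-\tanh(\beta(x_j))$. Both weights lie in $(0,1)$ and sum to $1$ for every finite $\beta$.

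\emph{Step (ii): the head output.} Since the two experts are $\pm C u(x_i)$, the head output factors as
\[
o_i(x)=Cu(x_i)\bigl(r_{i,+}-r_{i,-}\bigr)=-Cu(x_i)\tanh(\beta(x_j)).
\]
Now set $\beta=-\operatorname{arctanh}(v/C)$. We need $|v(x_j)/C|<1$ everywhere, and this holds uniformly because $C>\|v\|_\infty$; indeed $\|v/C\|_\infty\le\|v\|_\infty/C<1$. So $\beta$ is a finite-valued, continuous function on the compact set $\mathcal X_j$, being a composition of continuous maps, and in particular it is bounded. Substituting gives $\tanh\beta=-v/C$, so $o_i(x)=u(x_i)v(x_j)$ exactly.

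\emph{Step (iii): realizability and pairwise-only dependence.} The logits must be of the form $\mu_i+\sum_\ell \mathcal A_{\ell i}^\top E_\ell(x_\ell)$. I will take $\alpha\equiv 0$ and $\mu_i=0$, and set $\mathcal A_{\ell i}=0$ for all $\ell\neq j$. Next, I will reserve one latent coordinate of $E_j$ to output the continuous function $\beta(x_j)$, which is allowed since the encoders are arbitrary continuous maps in the theoretical setting and $d$ can be enlarged. Then $\mathcal A_{ji}$ is chosen with columns $\mp e$, where $e$ selects that coordinate.

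The expert maps also need to be realizable. I will let another coordinate of $E_i$ carry $u(x_i)$ and take $g_{i,\pm}$ to be linear with slopes $\pm C$ on it. The extra coordinates do not disturb feature $j$'s own experts, because those experts can simply ignore them.

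The subtle point is that the lemma's softmax runs over $\{+,-\}$ only. Under dense routing with additional experts in head $i$, I would interpret this as a sub-gate, so the identity holds for the restricted two-way softmax as stated. Reconciling it with a single $K$-way softmax, as needed in Theorem \ref{thm:ga2m}, is deferred to that theorem's proof.
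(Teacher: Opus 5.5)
Your proposal is correct and follows the paper's proof. The paper likewise writes the two-way softmax weights as $1/(1+e^{\pm 2\beta(x_j)})$, reads off $r_{i,+}-r_{i,-}=-\tanh(\beta(x_j))$, and substitutes $\beta=-\operatorname{arctanh}(v/C)$ to obtain $o_i=u(x_i)v(x_j)$; it then gets $x_j$-only dependence by taking $\alpha$ constant, and your additional checks (that $C>\|v\|_\infty$ makes $\beta$ well defined, continuous and bounded, plus the explicit encoder and $\mathcal A_{ji}$ realization) only spell out what the paper leaves implicit.
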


\begin{proof}
With two-class softmax, if the logits are $\alpha\pm\beta$, the probabilities are
\[
r_{i,+}(x)=\frac{1}{1+e^{2\beta(x_j)}},\quad
r_{i,-}(x)=\frac{1}{1+e^{-2\beta(x_j)}},
\]
so $r_{i,+}(x)-r_{i,-}(x)=-\tanh(\beta(x_j))$. Setting $\beta(x_j)=-\operatorname{arctanh}(v(x_j)/C)$ yields $r_{i,+}(x)-r_{i,-}(x)=v(x_j)/C$. Then
\[
o_i(x)=r_{i,+}(x)\,C\,u(x_i)+r_{i,-}(x)\,(-C\,u(x_i))=u(x_i)\,v(x_j).
\]
The dependence on $x_j$ only is enforced by letting $\alpha$ be constant and $\beta$ a function of $x_j$ alone.
\end{proof}

\begin{theorem}[Restatement of Theorem \ref{thm:GAM}]
\label{app:thm:GAM}
For any $f\in \GAM$, there exists an NAE with $K=1$ per feature such that $\hat y(x)=f(x)$ for all $x$.
\end{theorem}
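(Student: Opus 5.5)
The plan is a direct construction: take $K=1$, so the softmax gate is trivial, and choose each single expert so that its composition with the encoder reproduces the given shape function. Write $f(x)=\omega_0+\sum_{i=1}^n f_i(x_i)$ with each $f_i\in C(\mathcal X_i)$, and take the NAE intercept equal to $\omega_0$.

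\textbf{Step 1: the gate is identically one.} With $K=1$, the softmax in Equation \eqref{eq:relevance} runs over a single index. Whatever the logit $\varphi_i[1]$ is, we get $r_{i1}(x)=\exp(\varphi_i[1])/\exp(\varphi_i[1])=1$ for every $x$. The mask is irrelevant here, because the unique expert is never masked. So the choice of $\mathcal A_{ij}$ and $\mu_i$ does not matter; for definiteness we set them to zero. It follows that $o_i(x)=g_{i1}(E_i(x_i))$ depends on $x_i$ only.

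\textbf{Step 2: realize $f_i$ exactly.} Take $d=1$ and $E_i(x_i)=x_i$, which is continuous on the compact set $\mathcal X_i\subset\mathbb R$. Then we need some $g_{i1}:\mathbb R\to\mathbb R$ that agrees with $f_i$ on $\mathcal X_i$.

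\begin{itemize}
\item If the theorem only requires $g_{i1}$ to be a function, we simply set $g_{i1}=f_i$ on $\mathcal X_i$ and extend arbitrarily off $\mathcal X_i$.
\item If continuity on all of $\mathbb R$ is wanted, we invoke the Tietze extension theorem, since $\mathcal X_i$ is closed in $\mathbb R$. Alternatively, take the encoder to be $E_i=f_i$ itself with $g_{i1}=\mathrm{id}$, which avoids extension entirely.
\end{itemize}

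Either way, $g_{i1}\circ E_i=f_i$ holds on $\mathcal X_i$.

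\textbf{Step 3: sum.} By Equation \eqref{eq:final_prediction}, $\hat y(x)=\omega_0+\sum_i f_i(x_i)=f(x)$ for all $x$, which is the claimed exact representation.

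\textbf{Main obstacle.} The only real point of care is the class of admissible $E_i$ and $g_{i1}$. If these are assumed to be arbitrary continuous maps, the construction is exact, as above. If they are restricted to finite networks (e.g.\ linear $g_{i1}$ and MLP encoders), exact equality generally fails. The choice $E_i=f_i$, $g_{i1}=\mathrm{id}$ shows that the encoder class only needs to contain each $f_i$. Otherwise one would fall back to a universal approximation theorem and obtain uniform approximation rather than equality, which is the sense used later in Theorem \ref{thm:ga2m}.
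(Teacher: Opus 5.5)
Your proposal is correct and matches the paper's proof. Like the paper, you set $K=1$ so the softmax gate is identically $1$, choose $g_{i1}\circ E_i=f_i$ for each feature, and take the intercept equal to $\omega_0$; your extra remarks (that the gating parameters and mask are irrelevant when $K=1$, and the explicit realization $E_i=f_i$, $g_{i1}=\mathrm{id}$ or a Tietze extension) only spell out details the paper leaves implicit.
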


\begin{proof}
Set $K=1$, $r_{i1}(x)\equiv 1$, choose $g_{i1}\!\circ E_i = f_i$ and $\omega_0$ to match the intercept. Then $\hat y(x)=f(x)$.
\end{proof}

\begin{theorem}[Restatement of Theorem \ref{thm:ga2m}]
\label{app:thm:ga2m}
Let $f\in \GAAM$. For any $\epsilon>0$, there exists an NAE with context-only gating and finite experts $\{K_i\}$ such that $\|\hat y-f\|_\infty<\epsilon$. Moreover, one may take
\[
K_i \;\le\; 1+2\sum_{j\ne i} M_{ij},
\]
where $M_{ij}$ is the number of separable terms used in Lemma~\ref{lem:sep}.
\end{theorem}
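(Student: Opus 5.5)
The plan is to decompose $f$ into its main effects and its pairwise terms, and to approximate each pairwise term by finitely many separable products. Write
\[
f(x)=\omega_0+\sum_i f_i(x_i)+\sum_{i<j} f_{ij}(x_i,x_j).
\]
Let $P=\binom{n}{2}$ be the number of pairs. For each pair $i<j$, apply Lemma~\ref{app:lem:sep} to $f_{ij}$ with tolerance $\epsilon/(2P)$. This gives continuous $u_{ijm}$ and $v_{ijm}$, $m=1,\dots,M_{ij}$, with
\[
\Big\|f_{ij}-\sum_{m=1}^{M_{ij}} u_{ijm}v_{ijm}\Big\|_\infty<\frac{\epsilon}{2P}.
\]
By the triangle inequality the total residual $r$ has $\|r\|_\infty<\epsilon/2$. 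The target is therefore an exactly additive-plus-separable function $\tilde f$ with $\|f-\tilde f\|_\infty<\epsilon/2$. It remains to realize $\tilde f$ by an NAE, exactly or within $\epsilon/2$.

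Each term is assigned to a feature head. Every product $u_{ijm}(x_i)v_{ijm}(x_j)$ goes to head $i$, the lower index; this is why the bound reads $\sum_{j\ne i}$ but is only attained loosely. Head $i$ then receives the following.
\begin{itemize}
\item One additive expert realizing $f_i$. Its gate is pinned near weight $1$, as in Theorem~\ref{app:thm:GAM}.
\item One pair of experts $\pm C_{ijm}u_{ijm}$ for each assigned product. Here $C_{ijm}>\|v_{ijm}\|_\infty$, which is finite by compactness.
\end{itemize}
This gives $K_i\le 1+2\sum_{j\ne i}M_{ij}$ experts. The intercept $\omega_0$ is set directly.

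The main obstacle is that Lemma~\ref{app:lem:two-expert} uses a softmax over only two experts. In the NAE, a single softmax runs over all $K_i$ experts of head $i$, so the pairs cannot be gated independently. I would try two fixes, in this order.

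\textbf{First fix: make the constant-sum structure explicit.} Give each block $b$ (a $\pm$ pair or the additive expert) a total mass $w_b$. Inside each $\pm$ pair, put logits $\alpha_b\mp\beta_b(x_j)$ with $\alpha_b$ constant. Because $e^{\alpha-\beta}+e^{\alpha+\beta}=2e^{\alpha}\cosh\beta$, the mass of the pair still varies with $x_j$. To cancel this, choose the within-pair logits as
\[
\log p_b(x_j)+c_b,\qquad \log q_b(x_j)+c_b,
\]
where $p_b,q_b>0$ and $p_b+q_b\equiv 1$ with $p_b-q_b=v/C$. Then the unnormalized mass of block $b$ is the constant $e^{c_b}$, and the normalizer is constant. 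Head $i$ becomes
\[
o_i=w_0 f_i+\sum_b w_b\,C_b\,u_b(x_i)\,v_b(x_j)/C_b .
\]
Rescale each expert by $1/w_b$ to absorb the weights. This yields an exact representation of $\tilde f$ with constant mixture weights $w_b$.

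\textbf{Second fix, if required.} The paper's gate in Equation~\eqref{eq:aggregation} is affine in the encodings $E_\ell(x_\ell)$. I would then need the encoder $E_j$ to expose coordinates equal to $\log p_b(x_j)$ and $\log q_b(x_j)$. This is allowed because $E_j$ is an arbitrary continuous map into $\mathbb R^d$ with $d$ large enough; if needed, use universal approximation of the encoder. Any small error in the logits perturbs $o_i$ by at most $\sup|u|\cdot C\cdot(\text{gate error})$, which is uniformly controllable on compact domains. This puts the total error under $\epsilon/2$, giving $\|\hat y-f\|_\infty<\epsilon$.
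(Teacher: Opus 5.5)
Your proposal is correct and follows the paper's proof. It uses the same Stone--Weierstrass splitting with per-pair tolerance $\epsilon/(2\binom{n}{2})$, one constant-gated expert per main effect, a $\pm C u$ expert pair per separable product, and the same expert count. Your one genuine addition is the choice of within-pair logits $\log p_b(x_j)+c_b$ and $\log q_b(x_j)+c_b$, which is Lemma~\ref{app:lem:two-expert} with $\alpha$ chosen so each block's softmax mass is constant. That is a valid and needed repair: all $K_i$ experts of a head share one softmax normalizer, a point the paper's proof skips by invoking the two-way lemma directly.
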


\begin{proof}
Write $f(x)=\omega_0+\sum_i f_i(x_i)+\sum_{i<j} f_{ij}(x_i,x_j)$. Approximate each $f_{ij}$ by $\sum_m u_{ijm}(x_i)v_{ijm}(x_j)$ with error $<\epsilon/(2\binom{n}{2})$. Realize each $f_i$ via a single expert with constant gate. Realize each separable product $u_{ijm}(x_i)v_{ijm}(x_j)$ using Lemma \ref{lem:two-expert}. Summing all contributions gives $\hat y$ within $\epsilon$. Counting experts yields the bound on $K_i$.
\end{proof}

\begin{theorem}[Restatement of Theorem \ref{thm:additivity}]
\label{app:thm:additivity}
Let $\theta_\lambda$ minimize the penalized empirical objective in Equation \eqref{eq:loss}:
\[
\mathcal L_N(\theta;\lambda)
=\frac1N\sum_{t=1}^N L\!\big(y_t,\hat y_\theta(x_t)\big)
\;+\;
\frac{\lambda}{nNK}
\sum_{t=1}^N \sum_{i=1}^n \sum_{k=1}^K
\Big(o_{ik}(x_t)-\tfrac{1}{K}\sum_{\ell=1}^K o_{i\ell}(x_t)\Big)^2.
\]
Then the within-feature dispersion
$\mathsf{Pen}(\theta):=\frac{1}{nNK}\sum_{t,i,k}\big(o_{ik}(x_t)-\frac{1}{K}\sum_\ell o_{i\ell}(x_t)\big)^2$
is nonincreasing in $\lambda$, and $\mathsf{Pen}(\theta_\lambda)\to 0$ as $\lambda\to\infty$.
Consequently, any limit point of $\hat y_{\theta_\lambda}$ is additive; moreover the additivity metric $A(\theta_\lambda)$
from Equation \eqref{eq:additivity} satisfies $A(\theta_\lambda)\to 1$.
\end{theorem}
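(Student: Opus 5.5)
The plan is to split the statement into three parts: monotonicity of the penalty, vanishing of the penalty, and consequences for additivity. Write $F(\theta):=\frac1N\sum_t L(y_t,\hat y_\theta(x_t))$, so $\mathcal L_N(\theta;\lambda)=F(\theta)+\lambda\,\mathsf{Pen}(\theta)$.

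For monotonicity I would use the standard exchange argument for penalized minimizers. Take $\lambda<\lambda'$ with minimizers $\theta_\lambda,\theta_{\lambda'}$. Optimality of each gives
\[
F(\theta_\lambda)+\lambda\mathsf{Pen}(\theta_\lambda)\le F(\theta_{\lambda'})+\lambda\mathsf{Pen}(\theta_{\lambda'})
\]
and the symmetric inequality with $\lambda'$. Adding the two yields $(\lambda'-\lambda)\big(\mathsf{Pen}(\theta_\lambda)-\mathsf{Pen}(\theta_{\lambda'})\big)\ge 0$. Hence $\mathsf{Pen}$ is nonincreasing along any selection of minimizers. No convexity is needed, only existence of minimizers, which I would assume as in the statement. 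As a by-product, $F(\theta_\lambda)$ is nondecreasing.

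For the limit I would compare with the GAM configuration of Theorem \ref{thm:GAM}. Fix any parameter $\theta_0$ in which all experts of each feature are identical, so $\mathsf{Pen}(\theta_0)=0$; for example, all $g_{ik}$ zero and $\omega_0$ fixed. Then
\[
\lambda\mathsf{Pen}(\theta_\lambda)\le \mathcal L_N(\theta_\lambda;\lambda)\le F(\theta_0).
\]
Assuming $L\ge 0$ (true for squared and cross-entropy loss), this gives $\mathsf{Pen}(\theta_\lambda)\le F(\theta_0)/\lambda\to0$.

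Next I would translate vanishing penalty into additivity. On each sample, $\mathsf{Pen}\to0$ forces every $o_{ik}(x_t)-\bar o_i(x_t)\to0$, where $\bar o_i=\frac1K\sum_k o_{ik}$ depends only on $x_{t,i}$. Since $\sum_k r_{ik}=1$,
\[
|o_i(x_t)-\bar o_i(x_{t,i})|\le \max_k|o_{ik}(x_t)-\bar o_i(x_t)|\le \sqrt{nNK\,\mathsf{Pen}(\theta_\lambda)}\to 0.
\]
This holds for every choice of gates. So $\hat y_{\theta_\lambda}(x_t)=\omega_0+\sum_i\bar o_i(x_{t,i})+o(1)$ on the sample. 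Any limit point of $\hat y_{\theta_\lambda}$ is therefore additive on the sample. The gates become irrelevant, and one can replace them by constants, as claimed.

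For the metric, treat $\mathrm{Var}$ and $\mathbb E$ as empirical quantities. Write $o_i=\bar o_i+e_i$ with $\|e_i\|_\infty\to0$. Since $\mathbb E(\bar o_i\mid x_i)=\bar o_i$, both $\mathrm{Var}(\mathbb E(o_i\mid x_i))$ and $\mathrm{Var}(o_i)$ equal $\mathrm{Var}(\bar o_i)+O(\|e_i\|_\infty(\|\bar o_i\|_\infty+\|e_i\|_\infty))$. The ratio with $\delta>0$ then tends to $1$ provided $\bar o_i$ stays bounded. The metric's ratio is always at most 1 by the law of total variance, so $A\le1$ and we need only a lower bound.

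The main obstacle is that boundedness. The expert outputs could blow up while their dispersion shrinks, and convergence of $\hat y$ along limit points also needs some compactness. I would first try to get boundedness of $\hat y_{\theta_\lambda}(x_t)$ from $F(\theta_\lambda)\le F(\theta_0)$ together with coercivity of $L$ (e.g., squared loss). Splitting the individual $\bar o_i$ is only determined up to constants, so I would center each $\bar o_i$ to have empirical mean zero. The variance differences are then controlled via Cauchy–Schwarz using $\mathrm{Var}(\bar o_i)\le \mathrm{Var}(o_i)+o(1)$ and bounded output norms. If that fails, I would state the result under a mild boundedness assumption, such as the output penalty or weight decay already in use. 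Finally, I would note that monotonicity of $A(\lambda)$ itself, in the main-text version, does not follow from monotone $\mathsf{Pen}$ in general, and I would present only the monotone dispersion as rigorous.
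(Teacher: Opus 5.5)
Your parts (1) and (2) are the paper's proof. The paper uses the same exchange argument for monotonicity of $\mathsf{Pen}$, and the same zero-penalty GAM competitor to get $\mathsf{Pen}(\theta_\lambda)\le C/\lambda$.

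For the consequences you are more careful than the paper. The paper asserts that $o_{ik}(x_t)\to q_i(x_t)$ for some scalar, although the penalty only forces the dispersion to vanish, not the outputs to converge. It then writes $\sum_k r_{ik}q_i=Kq_i$ where it should be $q_i$. For the metric, it only evaluates $A$ at an exactly additive configuration rather than along the sequence. Your bound $|o_i(x_t)-\bar o_i(x_{t,i})|\le\sqrt{nNK\,\mathsf{Pen}(\theta_\lambda)}$ avoids all of this. It uses only that $\bar o_i$ depends on $x_{t,i}$ alone and that $\sum_k r_{ik}=1$. Together with closedness of the subspace of sample-additive vectors in $\mathbb R^N$, it gives sample-additivity of limit points. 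Your caveat that monotonicity of $A(\lambda)$ does not follow also matches what the appendix actually proves.

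The obstacle you single out is not real, and your route around it would fail in general. Coercivity of the loss bounds $\hat y_{\theta_\lambda}(x_t)$, that is, the sum $\sum_i\bar o_i$, but not the individual centred $\bar o_i$. These can diverge in opposite directions when features are dependent on the sample. For cross-entropy, the loss is not even coercive in $\hat y$. So your plan would end with an extra assumption that the statement does not need.

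Boundedness is unnecessary because $\delta>0$ makes each ratio scale-robust. Put $\epsilon=\sqrt{nNK\,\mathsf{Pen}(\theta_\lambda)}$ and $s^2=\mathrm{Var}(\bar o_i)$. For any $w$ with $|w|\le\epsilon$ on the sample, Cauchy--Schwarz gives $|\mathrm{Var}(\bar o_i+w)-s^2|\le 2s\epsilon+\epsilon^2$. Apply this with $w=e_i$ and with $w=\mathbb E(e_i\mid x_i)$, using $\mathbb E(\bar o_i\mid x_i)=\bar o_i$. Since $s^2+\delta\ge 2s\sqrt\delta$,
\[
\frac{2s\epsilon+\epsilon^2}{s^2+\delta}\;\le\;\frac{\epsilon}{\sqrt\delta}+\frac{\epsilon^2}{\delta}\;=:\;\eta .
\]
Hence each ratio in $A$ lies in $\bigl[(1-\eta)/(1+\eta),\,1\bigr]$ uniformly in $s$. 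The upper bound is your law-of-total-variance remark. Therefore $A(\theta_\lambda)\to1$ with no assumption on the size of the expert outputs, with $\mathrm{Var}$ and $\mathbb E$ taken empirically as you proposed.
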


\begin{proof}
\textbf{(1) Monotonicity in $\lambda$.}
Fix $0\le \lambda_1<\lambda_2$ and let $\theta_{\lambda_1},\theta_{\lambda_2}$ be minimizers for $\lambda_1,\lambda_2$.
By optimality,
\[
\underbrace{\tfrac1N\sum_t L(y_t,\hat y_{\theta_{\lambda_2}}(x_t))+\lambda_2\,\mathsf{Pen}(\theta_{\lambda_2})}_{\mathcal L_N(\theta_{\lambda_2};\lambda_2)}
\;\le\;
\underbrace{\tfrac1N\sum_t L(y_t,\hat y_{\theta_{\lambda_1}}(x_t))+\lambda_2\,\mathsf{Pen}(\theta_{\lambda_1})}_{\mathcal L_N(\theta_{\lambda_1};\lambda_2)},
\]
and
\[
\mathcal L_N(\theta_{\lambda_1};\lambda_1)
\;\le\;
\mathcal L_N(\theta_{\lambda_2};\lambda_1)
=
\tfrac1N\sum_t L(y_t,\hat y_{\theta_{\lambda_2}}(x_t))+\lambda_1\,\mathsf{Pen}(\theta_{\lambda_2}).
\]
Subtracting the second inequality from the first yields
$(\lambda_2-\lambda_1)\mathsf{Pen}(\theta_{\lambda_2})\le (\lambda_2-\lambda_1)\mathsf{Pen}(\theta_{\lambda_1})$,
so $\mathsf{Pen}(\theta_{\lambda})$ is nonincreasing.

\smallskip
\textbf{(2) $\mathsf{Pen}(\theta_\lambda)\to 0$.}
Take a zero-penalty competitor $\tilde\theta$ that implements a GAM with $K=1$ (or $K>1$ with $g_{ik}\equiv g_i$ so all $o_{ik}$ are identical); this is feasible by Theorem \ref{thm:GAM} and Equations \eqref{eq:expert_pred} - \eqref{eq:final_prediction}. Then for every $\lambda$,
\[
\mathcal L_N(\theta_\lambda;\lambda)\ \le\ \mathcal L_N(\tilde\theta;\lambda)\ =\ \tfrac1N\sum_t L\big(y_t,\hat y_{\tilde\theta}(x_t)\big)\ =:C.
\]
Because the loss is nonnegative for the tasks considered (e.g., MSE, cross-entropy), we have
$\lambda\,\mathsf{Pen}(\theta_\lambda)\le C$, hence $\mathsf{Pen}(\theta_\lambda)\le C/\lambda \to 0$.

\smallskip
\textbf{(3) Collapse within each feature head at training points.}
Fix any feature $i$ and training example $x_t$.
By definition of $\mathsf{Pen}$ and Step (2),
\[
\frac{1}{K}\sum_{k=1}^K\big(o_{ik}(x_t)-\bar o_i(x_t)\big)^2 \ \longrightarrow\ 0
\quad\text{where}\quad
\bar o_i(x_t)=\tfrac{1}{K}\sum_{\ell=1}^K o_{i\ell}(x_t).
\]
Thus the vector $\big(o_{i1}(x_t),\dots,o_{iK}(x_t)\big)$ becomes constant in $k$:
for each $(i,t)$ there exists a scalar $q_i(x_t)$ with $o_{ik}(x_t)\to q_i(x_t)$ for all $k$.
Since the feature contribution is $o_i(x_t)=\sum_k r_{ik}(x_t)\,o_{ik}(x_t)$ (Equation \eqref{eq:expert_aggregation}),
we get $o_i(x_t)\to \sum_k r_{ik}(x_t)\,q_i(x_t)=K\,q_i(x_t)$.
Define $h_i(x_{i,t}):=K\,q_i(x_t)$; then
\[
\hat y_{\theta_\lambda}(x_t)
=\omega_0+\sum_{i=1}^n o_i(x_t)
\ \Longrightarrow\ 
\omega_0^\star+\sum_{i=1}^n h_i(x_{i,t}),
\]
which is additive on the training set.

\smallskip
\textbf{(4) Additivity in the limit and the metric $A(\lambda)$.}
At any \emph{exact} zero-penalty parameter setting, $o_{ik}(x_t)$ is identical across $k$ for every $(i,t)$,
so setting $g_{ik}\equiv g_i$ and any (e.g., constant) softmax weights $r_{ik}$ yields a GAM realization
with the same predictions (Equations \eqref{eq:expert_pred} - \eqref{eq:final_prediction}).
For the additivity metric in Equation \eqref{eq:additivity}, when each $o_i$ depends only on $x_i$, we have
$\mathrm{Var}(\mathbb E[o_i\!\mid\!x_i])=\mathrm{Var}(o_i)$, hence $A(\theta_\lambda)\to 1$ as $\lambda\to\infty$.
\end{proof}

\section{MORE SIMULATION STUDIES} \label{sec:simu_more}

\subsection{Impact of Feature Sparsity}

To assess robustness to imbalanced feature distributions, we simulate datasets where the proportion of instances with \(x_2=1\) is set to 50\%, 25\%, 5\%, and 1\%. Figure~\ref{fig:simul_sparse} shows shape plots for traditional GAMs and NAEs with increasing numbers of experts (\(K=2, 8, 32\)).

\begin{figure}[h!]
    \centering
    \includegraphics[width=0.8\linewidth]{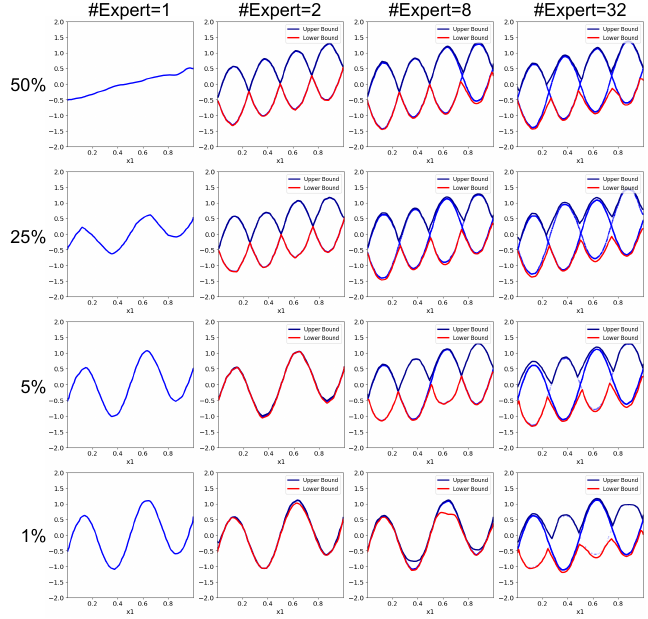}
    \caption{Shape plots of GAM and NAEs on simulated data with varying feature sparsity for \(x_2\). NAEs recover minority signals even at high sparsity, while GAM overfits the majority class.}
    \label{fig:simul_sparse}
\end{figure}

As sparsity increases, traditional GAMs overfit the majority class and miss minority patterns. NAEs, with more experts, consistently recover the multimodal structure, demonstrating robustness to feature imbalance.

\subsection{Impact of Distribution Modality}

We further test NAEs on data with increasing numbers of modes by introducing multiple categorical features:
\begin{equation}
y = \varepsilon + x_1 - \frac{1}{2} + \frac{1}{CF}\sum_{i=2}^{CF+1}x_i\sin(4\pi x_1),
\end{equation}
where \(\varepsilon \sim \mathcal{N}(0,0.01^2)\), \(x_1 \sim U(0,1)\), and each \(x_i\) (\(i\geq2\)) is uniformly sampled from \(\{-1,+1\}\). We vary the number of categorical features (\(CF = 1, 3, 5, 7\)).

\begin{figure}[h!]
    \centering
    \includegraphics[width=0.9\linewidth]{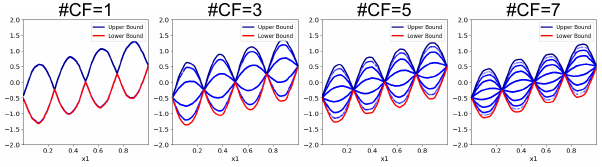}
    \caption{Shape plots of NAEs on simulated data with increasing numbers of categorical features. The model adapts to capture the growing complexity of the underlying function.}
    \label{fig:simul_variable}
\end{figure}

Figure~\ref{fig:simul_variable} shows that NAEs flexibly capture the increasing complexity of the shape function as the number of modes grows, highlighting the model's scalability and adaptability to complex, multimodal distributions.

\subsection{Impact of Correlated Features}

To examine the effect of feature correlation, we simulate data where \(x_2\) is correlated with \(x_1\). Based on the multimodal setup in Equation \eqref{eq:multimodal}, we introduce correlation between $x_1$ and $x_2$ by setting $x_2$ as a binary variable $\in \{-1, 1\}$ with conditional probability
\begin{equation}
P(x_2=1|x_1) = \rho x_1 + (1-\rho)/2,
\end{equation}
where $\rho \in \{0, 0.5, 0.9\}$ controls the correlation strength. The first feature $x_1$ is drawn uniformly from [0, 1], and the response is generated as
\begin{equation}
    y = x_2\sin(4\pi x_1) + x_2 + \varepsilon.
\end{equation}

The comparison results (RMSE, lower is better) between NAE and NAM are shown in Table \ref{tab:rho_results}. As shown in the table, NAE maintains low RMSE across all correlation levels, whereas NAM performs significantly worse and only slightly improves with increased correlation. This demonstrates the robustness of NAE to correlated inputs.

\begin{table}[h]
\centering
\caption{Performance comparison under different correlation levels.}
\begin{center}
\begin{tabular}{lccc}
\toprule
Model & $\rho = 0.0$ & $\rho = 0.5$ & $\rho = 0.9$ \\
\midrule
NAE & 0.1066 & 0.1068 & 0.1090 \\
NAM & 0.7112 & 0.6829 & 0.6088 \\
\bottomrule
\end{tabular}
\end{center}
\label{tab:rho_results}
\end{table}

\subsection{Impact of Generic Interactions}

To evaluate the ability of NAE to capture generic interactions, we simulate data with a complex interaction structure, where $x_1,x_2 \sim Uniform[-1, 1]$ and the response is generated as
\begin{equation}
    y = 2 \sin(\pi x_1)\cos(\pi x_2) + 0.5 x_1^2 + 0.5 x_2^2 + \varepsilon.
\end{equation}
This target includes a multiplicative interaction term that cannot be decomposed additively. 
With the same evaluation settings as in previous simulations, we test both NAE and NAM on this data. The results show that NAE achieves a low RMSE of 0.1306, closely approximating the true function, while NAM fails to capture the interaction and yields a much higher RMSE of 1.0114. This illustrates that NAE can effectively learn complex interactions that are not strictly additive, as predicted by Theorem \ref{thm:ga2m}.

\section{COMPARISON BETWEEN NAE AND GA$^2$M}\label{sec:ga2m}

As demonstrated in Section \ref{sec:empirical_real}, NAE provides both individual feature attributions and visualizations of feature interactions. To compare with GA$^2$M, which emphasizes feature interactions, we conducted additional experiments using EB$^2$M. Specifically, we obtained single-feature attributions by summing all interaction terms associated with each feature. For a fair comparison, we visualized the actual effect of each feature value, including the interaction effects, and plotted the lower and upper bounds (i.e., the minimum and maximum effect of a feature value across all captured interactions).

\begin{figure}[h!]
    \centering
    \includegraphics[width=1\linewidth]{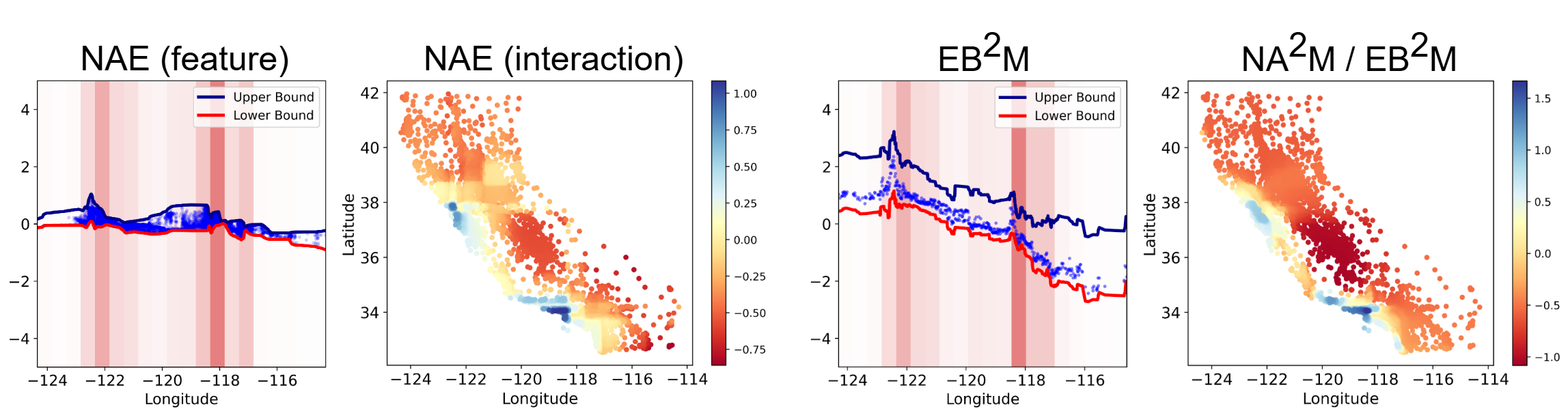}
    \caption{Comparison of NAE and GA$^2$M visualizations for the ``Longitude'' feature effect and the ``Longitude-Latitude'' interaction on house prices.}
    \label{fig:ga2m_comparison}
\end{figure}

As shown in Figure \ref{fig:ga2m_comparison}, while the interaction interpretation by GA$^2$M is similar to NAE, the single-feature attributions from EB$^2$M yield bounds that are too loose to be interpretable, suggesting high variability in predictions, especially for out-of-distribution inputs. Also, this approach to visualizing individual feature attributions is not applicable to NA$^2$M, as it does not provide strict output ranges. Moreover, a key advantage of NAE is its ability to control model additivity, allowing a smooth transition from a complex interaction model to a strictly additive one, which is a flexibility not available in GA$^2$M.

Additionally, NAE is not limited to capturing pairwise interactions. For example, one can modify the relevance estimation in Equation \eqref{eq:aggregation} to keep the interaction terms among three features. Figure \ref{fig:three_way} focuses on the Los Angeles area, demonstrating the effect of ``AveRooms'' on the price prediction beyond the Latitude-Longitude interaction. Interestingly, the south-western part of the area is more robust to the AveRooms change compared to other parts, revealing a complex three-way interaction that GA$^2$M fails to capture.

\begin{figure}[h!]
    \centering
    \includegraphics[width=0.9\linewidth]{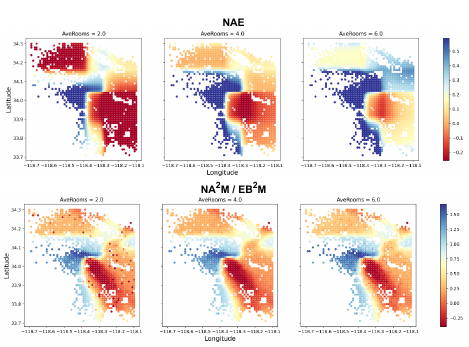}
    \caption{Comparison of NAE and GA$^2$M visualizations for the ``AveRooms-Latitude-Longitude'' interaction on house prices.}
    \label{fig:three_way}
\end{figure}

\section{NAE-D: NAE WITH DIAGONAL ROUTING MATRIX} \label{sec:mixnam-d}

We explored a special version of NAE, termed NAE-D, where the scoring matrices compose a block-diagonal matrix. In NAE-D, the matrix \(\mathcal{A}_{ij}\) in Equation \eqref{eq:aggregation} becomes a zero matrix if \(i\neq j\). This configuration allows NAE-D to function similarly to GAMs, where the relevance estimation of a feature's experts is solely based on its own encoded information.

Since the relevance for experts of each feature is solely determined by itself in NAE-D, the estimated relevance scores will be fixed for the same feature value. To encourage the model to learn various distributions with multiple experts, we introduce randomness into the decision-making process by employing the Gumbel-softmax technique \citep{jang2016categorical} with the temperature \(\tau=0.1\) to re-sample the experts during training based on their estimated relevance scores in Equation \eqref{eq:relevance}:

\begin{equation}
    \hat{r}_{ik} = \exp\Big(\frac{\log(r_{ik})+g_i}{\tau}\Big) \bigg/ \sum_{l=1}^K \exp\Big(\frac{\log(r_{il})+g_l}{\tau}\Big),
\end{equation}
where
\begin{equation}
g_i, g_l\sim Gumbel(0,1).
\end{equation}

The performance comparison of NAE-D with other baselines is presented in Table \ref{tab:mixnam-d_acc}. The results demonstrate that, by regularizing the scoring matrix to be diagonal, NAE-D shows a performance close to traditional additive models, which have worse accuracies than complex models that capture feature interactions. 

\begin{table*}[h!]  \small
    \centering
    \caption{Comparison of NAE-D to other baselines on benchmark datasets. ``Complex.'' denotes the explanation complexity, i.e., the number of components required to fully explain model predictions, with $n$ representing the number of features.}
    \begin{center}
    \resizebox{\linewidth}{!}{
    \begin{tabular}{lccccccccccc}
        \toprule
        \bf \multirow{2.5}{*}{Model} & \bf \multirow{2.5}{*}{\makecell{Interpret-\\ability}} & \bf \multirow{2.5}{*}{Complex.} & \bf Housing & \bf \makecell{MIMIC-II} & \bf MIMIC-III & \bf Income & \bf Credit & \bf Year \\
        \cmidrule(lr){4-4}
        \cmidrule(lr){5-5}
        \cmidrule(lr){6-6}
        \cmidrule(lr){7-7}
        \cmidrule(lr){8-8}
        \cmidrule(lr){9-9}
         & & & RMSE $\downarrow$ & AUC $\uparrow$ & AUC $\uparrow$ & AUC $\uparrow$ & AUC $\uparrow$ & MSE $\downarrow$ \\
        \midrule
        \rowcolor[RGB]{234, 238, 234} \multicolumn{9}{c}{\textbf{Black-box Models}} \\
        \midrule
        MLP & No & N/A & \makecell{0.501 $\pm$ 0.006} & \makecell{0.835 $\pm$ 0.014} & \makecell{0.815 $\pm$ 0.009} & \makecell{0.914 $\pm$ 0.003} & \makecell{0.981 $\pm$ 0.007} & \makecell{{78.48} $\pm$ 0.56} \\
        XGBoost & No & N/A & \makecell{{0.443} $\pm$ 0.000} & \makecell{{0.844} $\pm$ 0.012} & \makecell{0.819 $\pm$ 0.004} & \makecell{{0.928} $\pm$ 0.003} & \makecell{0.978 $\pm$ 0.009} & \makecell{{78.53} $\pm$ 0.09} \\
        \midrule
        \rowcolor[RGB]{234, 238, 234} \multicolumn{9}{c}{\textbf{Interaction-explaining Models}} \\
        \midrule
        EB$^2$M & Yes & $O(n^2)$ & \makecell{{0.492} $\pm$ 0.000} & \makecell{{0.848} $\pm$ 0.012} & \makecell{0.821 $\pm$ 0.004} & \makecell{{0.928} $\pm$ 0.003} & \makecell{{0.982} $\pm$ 0.006} & \makecell{83.16 $\pm$ 0.01} \\
        NA$^2$M & Yes & $O(n^2)$ & \makecell{0.492 $\pm$ 0.008} & \makecell{0.843 $\pm$ 0.012} & \makecell{{0.825} $\pm$ 0.006} & \makecell{0.912 $\pm$ 0.003} & \makecell{{0.985} $\pm$ 0.007} & \makecell{79.80 $\pm$ 0.05} \\
        \midrule
        \rowcolor[RGB]{204, 255, 255} \multicolumn{9}{c}{\textbf{Feature-explaining Models}} \\
        \midrule
        EBM & Yes & $O(n)$ & \makecell{0.559 $\pm$ 0.000} & \makecell{0.835 $\pm$ 0.011} & \makecell{0.809 $\pm$ 0.004} & \makecell{{0.927} $\pm$ 0.003} & \makecell{0.974 $\pm$ 0.009} & \makecell{85.81 $\pm$ 0.11} \\
        NAM & Yes & $O(n)$ & \makecell{0.572 $\pm$ 0.005} &  \makecell{0.834 $\pm$ 0.013} & \makecell{0.813 $\pm$ 0.003} & \makecell{0.910 $\pm$ 0.003} & \makecell{0.977 $\pm$ 0.015} & \makecell{85.25 $\pm$ 0.01} \\
        NAE-D & Yes & $O(n)$ & \makecell{0.553 $\pm$ 0.001} & \makecell{0.830 $\pm$ 0.012} & \makecell{0.805 $\pm$ 0.006} & \makecell{{0.927} $\pm$ 0.003} & \makecell{0.977 $\pm$ 0.010} & \makecell{85.60 $\pm$ 0.04} \\
        \bottomrule
    \end{tabular}
    }
    \end{center}
     \label{tab:mixnam-d_acc}
\end{table*}

\begin{figure}[h!]
    \centering
    \includegraphics[width=1\linewidth]{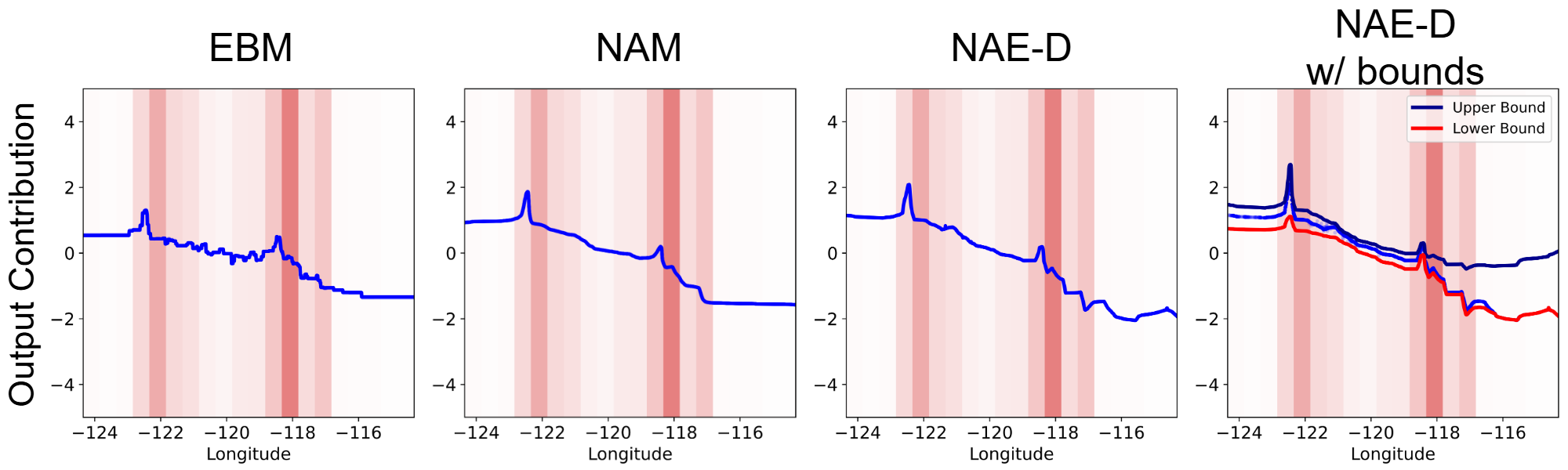}
    \caption{Visualization of the ``Longitude'' feature impact on house prices by NAE-D.}
    \label{fig:mixnam-d}
\end{figure}

Figure \ref{fig:mixnam-d} shows the shape plots generated by NAE-D compared to other baseline additive models. As discussed above, NAE-D is a strictly additive model where the effect of each feature on the final output is determined by the feature itself. Thus, we plot NAE-D in the same way as we did for EBM and NAM, directly showing the estimated contribution given by each feature value. Moreover, since NAE-D is trained under the general NAE framework with multiple experts learned for each feature, we plot another figure for NAE-D including the estimated upper and lower bounds based on the learned experts.

It can be observed from Figure \ref{fig:mixnam-d} that the patterns captured by NAE-D are similar to the ones by EBM and NAM. Beyond the point estimation provided by traditional additive models, NAE-D offers additional insights with its prediction bounds. In areas with low data density, the plot displays a wide gap between the upper and lower bounds, highlighting variability in predictions where less data is available. 

\section{NAE-E: NAE WITH EVENLY DISTRIBUTED EXPERT ACTIVATION} \label{sec:mixnam-e}

While the activation strategy detailed in Section \ref{sec:gating} facilitates continuous expert relevance estimation, we propose an adaptation that shifts this estimation to a discrete framework. This modification involves adjusting the weights to be evenly distributed across relevant experts, leading to a modified model termed NAE-E. To implement this, we use the masking vector outlined in Equation \eqref{eq:aggregation} and modify the relevance computation in Equation \eqref{eq:relevance} as follows:
\begin{equation} \label{eq:relevance_E}
    r_{ik} = \frac{\exp(\varphi_i[k] - \varphi_i^{*}[k] + M_i[k])}{\sum_{l=1}^K\exp(\varphi_i[l] - \varphi_i^{*}[l] + M_i[l])}.
\end{equation}
\(\varphi_i^{*}\) mirrors the values of \(\varphi_i\) but does not require gradient computations. The activation strategy implemented by Equation \eqref{eq:relevance_E} will result in a finite set of possible outcomes for each input feature value. Given a configuration of \(K\) experts and \(C\) activated ones per feature, the predicted output by NAE-E for any given feature will be one of the possible selections from \(\frac{K!}{C!(K-C)!}\) combinations.

\begin{table*}[h!] \small
    \centering
    \caption{Performance (RMSE\(\downarrow\)) and explanations of NAE-E with different variation penalties.}
    \begin{center}
    \begin{tabular}{ccccccc}
    \toprule
        \bf \(\lambda=\) 0 & \bf \(\lambda=\) 0.1 & \bf \(\lambda=\) 1 & \bf \(\lambda=\) 10 & \bf \(\lambda=\) 100 \\
    \midrule
        \rowcolor[RGB]{234, 238, 234} \multicolumn{5}{c}{RMSE (\(\downarrow\))} \\
        0.462 $\pm$ 0.002 & 0.460 $\pm$ 0.002 & 0.483 $\pm$ 0.001 & 0.551 $\pm$ 0.002 & 0.585 $\pm$ 0.003 \\
        \midrule
        \rowcolor[RGB]{234, 238, 234} \multicolumn{5}{c}{Shape Plot} \\
        \includegraphics[width=0.17\linewidth] {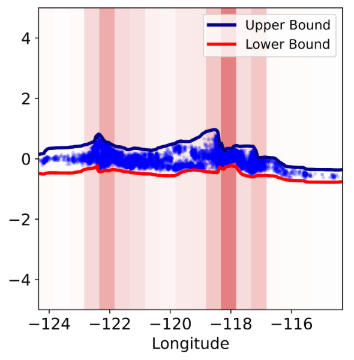} & \includegraphics[width=0.17\linewidth] {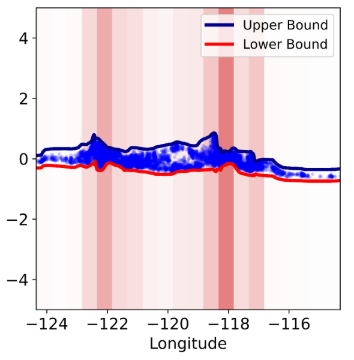} & \includegraphics[width=0.17\linewidth] {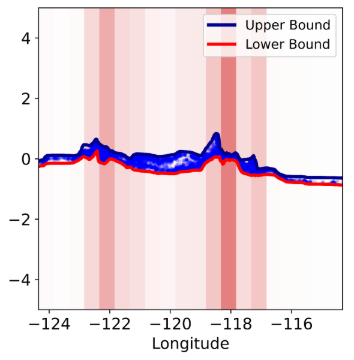} & \includegraphics[width=0.17\linewidth] {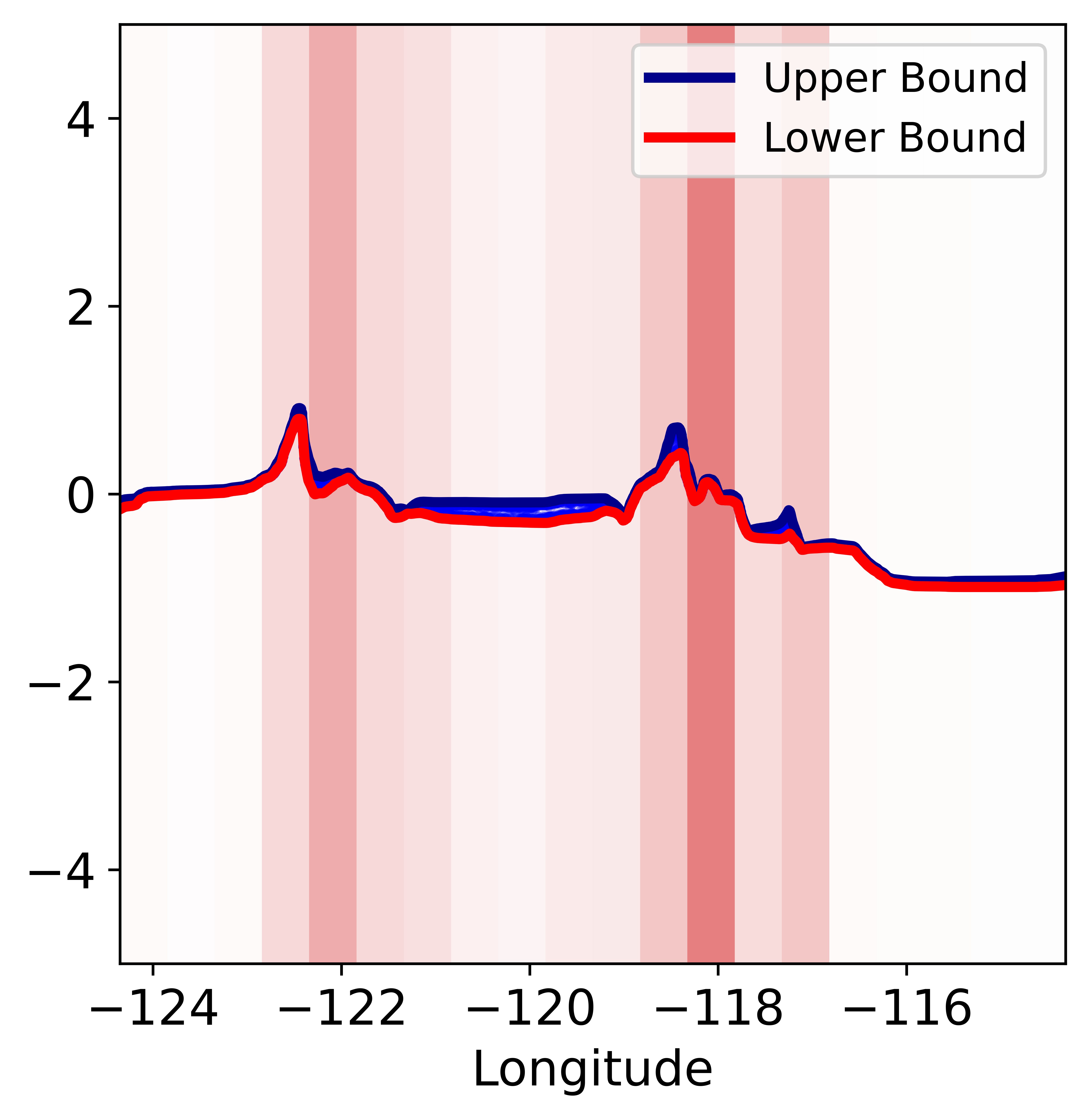} & \includegraphics[width=0.17\linewidth] {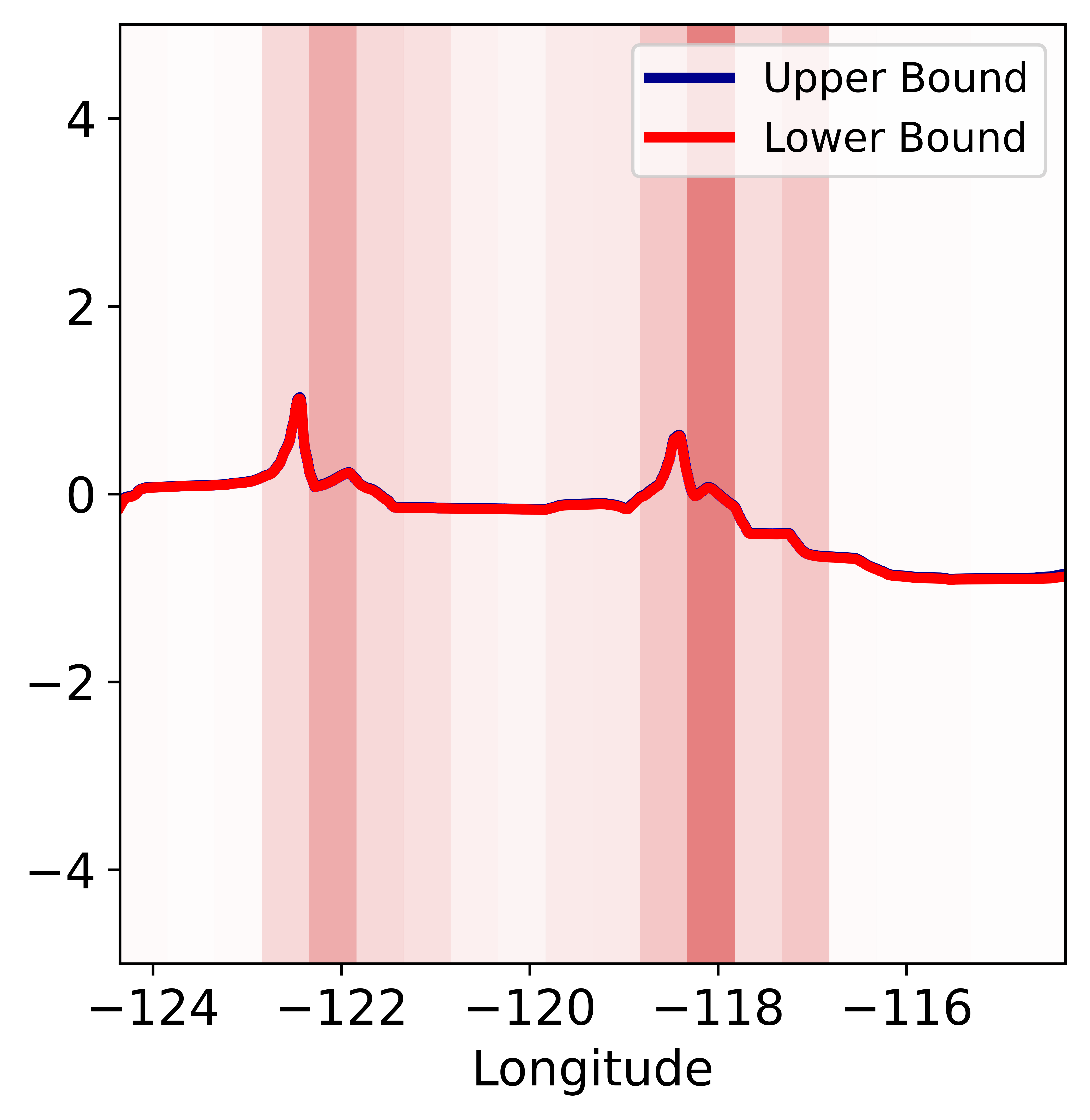} \\
    \bottomrule
    \end{tabular}
    \end{center}
    \label{tab:MixNAM-E}
\end{table*}

Table \ref{tab:MixNAM-E} shows the performance (RMSE) of NAE-E on the Housing dataset with different variation penalties, along with their corresponding shape plots of the ``Longitude'' feature. All the tested NAE-E models have a configuration of \(K=32\) and \(C=16\). In the plots, we illustrate the upper and lower bounds for each feature by selecting the maximum and minimum values from the \(\frac{K!}{C!(K-C)!}\) possible combinations. 

Compared with NAE (Table \ref{tab:additivity_tightness}), the performance of NAE-E is slightly worse due to the limitations imposed by its discrete range. However, even without any variation penalty, the estimated bounds by NAE-E still fit the actual score distribution tightly, accurately reflecting the prediction limits.

\section{PERFORMANCE OF NAE WITH THE SCALING OF EXPERTS} \label{sec:scaling}

As NAE employs a mixture of experts for data modeling, we explore how the number of total experts (\(K\)) and the number of activated experts (\(C\)) affect the overall model performance. Table \ref{tab:ablation} presents the performance of both NAE and NAE-E on the Housing dataset under various configurations. The results indicate that NAE reaches optimal performance at \(K=8\) and \(C=4\), and further increases in \(K\) or \(C\) do not enhance its performance. Such a finding is consistent with the role of experts in NAE, which primarily explore the variability in the prediction space to learn the upper and lower bounds, as the dynamic routing mechanism offers a continuous range within the bounds for prediction.
In contrast, NAE-E benefits from an increased number of \(K\) and \(C\), showing improved performance as these parameters grow. This difference shows how the continuous expert activation in NAE contrasts with the discrete one in NAE-E, highlighting the flexibility of our NAE framework to adapt to various scenarios through different implementations.

\begin{table}[h!] \small
    \centering
    \caption{Results for different numbers of activated experts ($C$) and all experts ($K$). The top three scores of each model are highlighted.}
    
\begin{center}
    %  NAE Table
    \begin{tabular}{ccccc}
        \toprule
        \bf \multirow{2.5}{*}{$C$ $\backslash$ $K$} & \multicolumn{4}{c}{\bf  NAE} \\
        \cmidrule(lr){2-5}
        & \textbf{2} & \textbf{4} & \textbf{8} & \textbf{16} \\
        \midrule
        \textbf{2} & 0.479 $\pm$ 0.003 & 0.459 $\pm$ 0.003 & 0.455 $\pm$ 0.004 & 0.460 $\pm$ 0.004 \\
        \textbf{4} & -- & \underline{0.451} $\pm$ 0.002 & \underline{0.447} $\pm$ 0.004 & 0.455 $\pm$ 0.004 \\
        \textbf{8} & -- & -- & \underline{0.449} $\pm$ 0.004 & 0.456 $\pm$ 0.003 \\
        \textbf{16} & -- & -- & -- & 0.456 $\pm$ 0.005 \\
        \specialrule{0.8pt}{0pt}{0pt}
    \end{tabular}
    %  NAE-E Table
    \begin{tabular}{ccccc}
        \bf \multirow{2.5}{*}{$C$ $\backslash$ $K$} & \multicolumn{4}{c}{\bf  NAE-E} \\
        \cmidrule(lr){2-5}
        & \textbf{16} & \textbf{32} & \textbf{64} & \textbf{128} \\
        \midrule
        \textbf{2}  & 0.486 $\pm$ 0.007 & 0.487 $\pm$ 0.005 & 0.486 $\pm$ 0.006 & 0.489 $\pm$ 0.005 \\
        \textbf{4}  & 0.469 $\pm$ 0.007 & 0.466 $\pm$ 0.005 & 0.468 $\pm$ 0.004 & 0.470 $\pm$ 0.004 \\
        \textbf{8}  & 0.470 $\pm$ 0.003 & 0.459 $\pm$ 0.004 & \underline{0.458} $\pm$ 0.002 & 0.458 $\pm$ 0.004 \\
        \textbf{16} & 0.587 $\pm$ 0.003 & 0.462 $\pm$ 0.002 & \underline{0.451} $\pm$ 0.002 & \underline{0.453} $\pm$ 0.002 \\
        \bottomrule
    \end{tabular}
\end{center}

    \label{tab:ablation}
\end{table}

To examine the interplay between the number of experts and the variation penalty, we conduct a grid search on the Housing dataset and report the RMSE scores in Table \ref{tab:k_lambda}. The results indicate that increasing \(K\) does not always lead to improved performance, and higher values of \(K\) require larger values of \(\lambda\) for regularization to prevent overfitting. This suggests that while more experts can capture more complex patterns, they also increase the risk of overfitting, necessitating stronger regularization to maintain generalization performance.

\begin{table}[h]
\centering
\caption{Effect of $\lambda$ for different values of $K$.}
\begin{center}
\begin{tabular}{ccccc}
\toprule
$K$ $\backslash$ $\lambda$ & 0 & 0.1 & 1 & 10 \\
\midrule
2  & 0.4805 & 0.4831 & 0.4989 & 0.5771 \\
4  & 0.4498 & 0.4498 & 0.4577 & 0.5068 \\
8  & 0.4507 & 0.4470 & 0.4460 & 0.4765 \\
16 & 0.4662 & 0.4528 & 0.4464 & 0.4509 \\
\bottomrule
\end{tabular}
\end{center}
\label{tab:k_lambda}
\end{table}

\section{DISCUSSION ON MODEL COMPLEXITY} \label{sec:complexity}
As NAE is a fundamental framework aiming to extend generalized additive models, our design prioritizes the performance and comprehensiveness of the overall framework over the optimization of model efficiency. On the basis of Neural Additive Models (NAMs), the neural-network-based Generalized Additive Models (GAMs), NAE introduces more parameters in the expert encoding and dynamic routing steps. To rigorously analyze the additional cost incurred by NAE compared to NAM, suppose we have \(n\) features, \(d\)-dimensional output for feature encoding, and \(K\) total experts for each feature. 
In our experiments, NAE is implemented with \(d=128\) and \(K=4\). For NAE-D discussed in Appendix \ref{sec:mixnam-d}, we have \(d=128\) with \(K=64\). The theoretical and empirical additional costs brought by NAE and its variant are presented in Table \ref{tab:complexity}, including the increase in parameter count and the additional memory required (assuming float32 storage). 

\begin{table}[h!] \small
\centering
\caption{Theoretical and empirical additional costs brought by  NAE compared to NAM.  NAE is implemented with \(d=128,K=4\).  NAE-D is implemented with \(d=128,K=64\). \(n\) denotes the number of input features.
}
\begin{center}
\begin{tabular}{l|c|cc|cc}
\toprule
& \bf \multirow{3}{*}{\bf \(n\)} & \multicolumn{2}{c|}{\bf { NAE}} & \multicolumn{2}{c}{\bf { NAE-D}} \\
\cmidrule(lr){3-4} \cmidrule(lr){5-6}
 &  & \bf \makecell{Parameter\\Count} & \bf \makecell{Memory\\Usage} & \bf \makecell{Parameter\\Count} & \bf \makecell{Memory\\Usage} \\
\midrule
 Housing & 8 & 37k & 144k & 132k & 516k \\
 MIMIC-II & 17 & 157k & 613k & 281k & 1.1M \\
 MIMIC-III & 57 & 1.7M & 6.5M & 941k & 3.59M \\
 Income & 14 & 108k & 420k & 231k & 903k \\
 Credit & 30 & 476k & 1.8M & 495k & 1.89M \\
 Year & 90 & 4.2M & 16.0M & 1.5M & 5.67M \\
\midrule
Theoretical & -- & \multicolumn{2}{c|}{\(nK[(n+1)d+2]\)} & \multicolumn{2}{c}{\(nK(2d+2)\)} \\
\bottomrule
\end{tabular}
\label{tab:complexity}
\end{center}
\end{table}

The results indicate that the additional cost associated with NAE-D scales linearly with the number of features, whereas the cost for NAE includes a squared term due to its more complex routing mechanism. Specifically, NAE utilizes a full \(n\times n\) block matrix to encode feature interactions, which is simplified as a block-diagonal matrix in NAE-D. Despite this increased complexity, the additional memory usage remains manageable with current computing resources.
Future efforts could be made to sparsify the routing matrix in NAE, which could potentially reduce the additional cost while retaining the model performance.

Additionally, we conducted experiments to measure actual training time (in seconds) / memory usage (in MB) across varying numbers of features ($n$) and experts ($K$) for NAE. Table \ref{tab:complexity_exp} summarizes the results with 5000 training samples and 100 training epochs, using the encoder architecture from Table \ref{tab:hyerparams} with four layers, each containing 128 neurons.

\begin{table}[h!] \small
\centering
\caption{Training time / memory usage of NAE with different $K$ and $n$ values.}
\begin{center}
\begin{tabular}{lccccc}
\toprule
& $n = $ 32 & $n = $ 64 & $n = $ 128 & $n = $ 256 & $n = $ 512 \\
\midrule
$K = $ 4 & 39.7s / 2.6MB & 39.8s / 7.2MB & 42.8s / 22.3MB & 64.4s / 76.6MB & 174.0s / 281.2MB \\
$K = $ 8 & 40.4s / 3.6MB & 42.6s / 11.2MB & 47.6s / 38.4MB & 102.3s / 140.8MB & 238.0s / 537.7MB \\
$K = $ 16 & 40.8s / 5.7MB & 42.3s / 19.3MB & 82.9s / 70.7MB & 150.1s / 269.4MB & 346.7s / 1050.7MB \\
$K = $ 32 & 42.9s / 9.8MB & 89.6s / 35.6MB & 111.7s / 135.2MB & 277.5s / 526.4MB & 559.0s / 2076.8MB \\
\bottomrule
\end{tabular}
\end{center}
\label{tab:complexity_exp}
\end{table}

The results validate the theoretical analysis in Table \ref{tab:complexity}, showing that NAE's training time and memory usage increase with both the number of features and experts.
For comparison, we also measured end-to-end training times and memory usage for NA$^2$M, EB$^2$M, NAM, and EBM, comparing them with NAE when \(K=4\), which is the setting used in our main experiments.
As the results in Table \ref{tab:complexity_exp2} show, while NAE preserves the capability to model interactions, it presents a computational cost that is lower than NA$^2$M and EB$^2$M, especially when the number of features is large.

\begin{table}[h!] \small
\centering
\caption{Training time and memory usage of NA$^2$M, EB$^2$M, NAM, and EBM for varying numbers of features ($n$). ``--'' indicates experiments that exceeded one hour to complete.}
\begin{center}
\resizebox{1.0\textwidth}{!}{ 
\begin{tabular}{lccccccc}
\toprule
& $n = $ 32 & $n = $ 64 & $n = $ 128 & $n = $ 256 & $n = $ 512 \\
\midrule
NA$^2$M & 70.9s / 24.2MB & 290.7s / 99.0MB & 1146.1s / 402.9MB & -- & -- \\
EB$^2$M & 4.5s / 2.2MB & 10.5s / 6.6MB & 34.4s / 22.5MB & 155.4s / 82.2MB & 1004.3s / 313.2MB \\
NAE ($K = $ 4) & 39.7s / 2.6MB & 39.8s / 7.2MB & 42.8s / 22.3MB & 64.4s / 76.6MB & 174.0s / 281.2MB \\
NAM & 6.3s / 1.6MB & 10.4s / 3.1MB & 17.2s / 6.3MB & 33.0s / 12.9MB & 66.0s / 26.8MB \\
EBM & 5.7s / 1.1MB & 9.4s / 2.1MB & 18.0s / 3.7MB & 37.6s / 4.5MB & 95.8s / 5.3MB \\
\bottomrule
\end{tabular}
}
\end{center}
\label{tab:complexity_exp2}
\end{table}

\section{MORE VISUALIZATION RESULTS OF NAE ON REAL-WORLD DATA} \label{appendix:more_visual}

Figures \ref{fig:housing_all} - \ref{fig:credit_all} show the complete visualization results of feature explanations by NAE on different real-world datasets. Here we present results for datasets with no more than 50 features, including Housing, MIMIC-II, Income, and Credit.

\begin{figure*}[h!]
    \centering
    \includegraphics[width=1\linewidth]{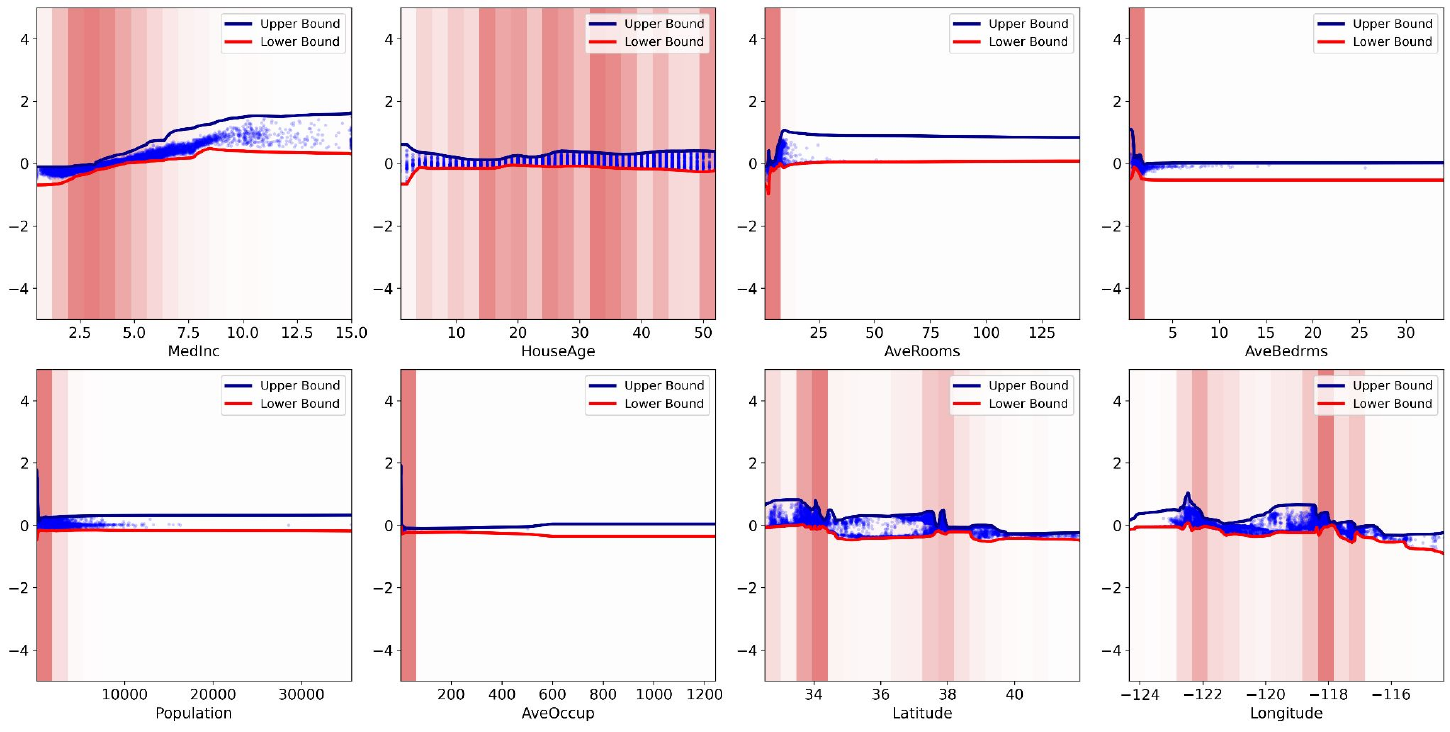}
    \caption{Visualization of feature explanations by NAE on the Housing dataset.}
    \label{fig:housing_all}
\end{figure*}

\begin{figure*}[h!]
    \centering
    \includegraphics[width=0.9\linewidth]{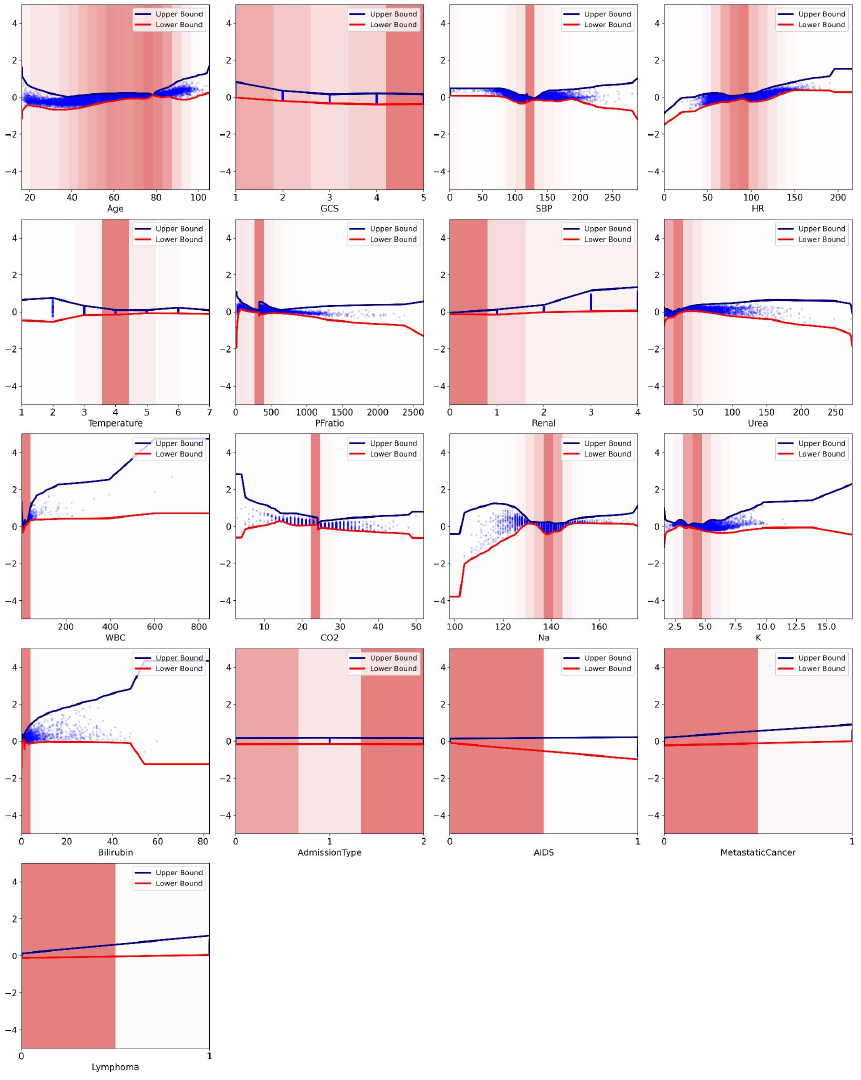}
    \caption{Visualization of feature explanations by NAE on the MIMIC-II dataset.}
    \label{fig:mimic2_all}
\end{figure*}

\begin{figure*}[h!]
    \centering
    \includegraphics[width=0.85\linewidth]{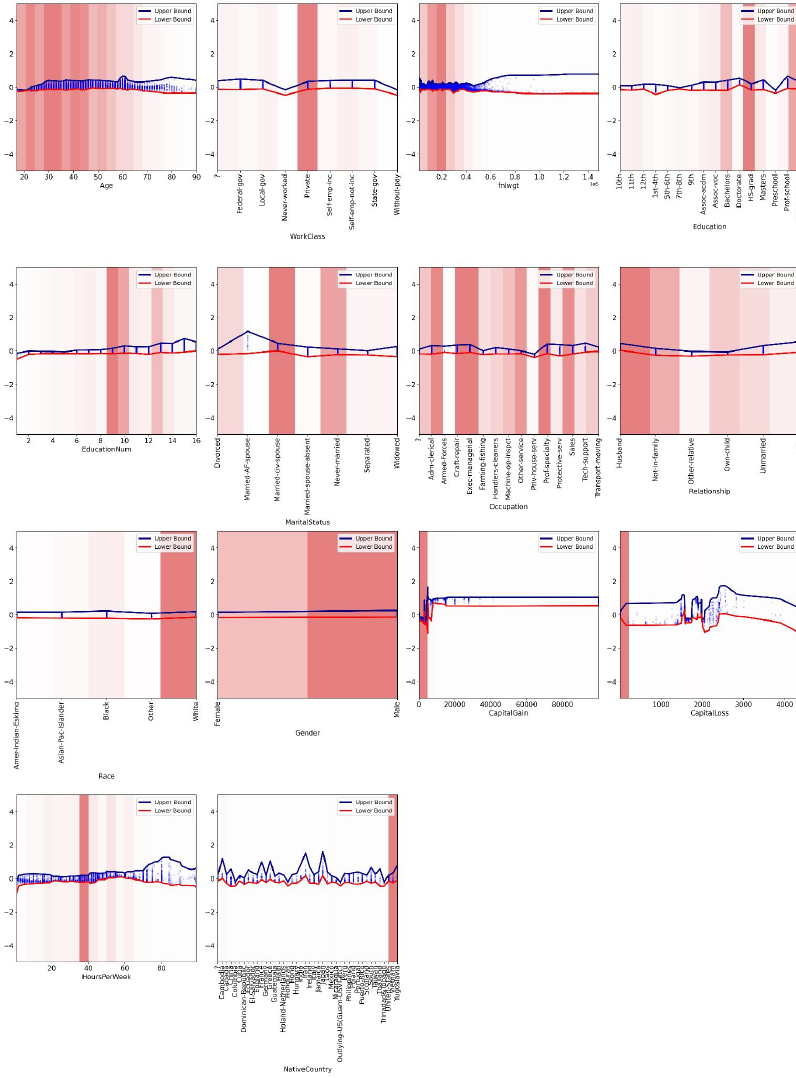}
    \caption{Visualization of feature explanations by NAE on the Income dataset.}
    \label{fig:income_all}
\end{figure*}

\begin{figure*}[h!]
    \centering
    \includegraphics[width=0.95\linewidth]{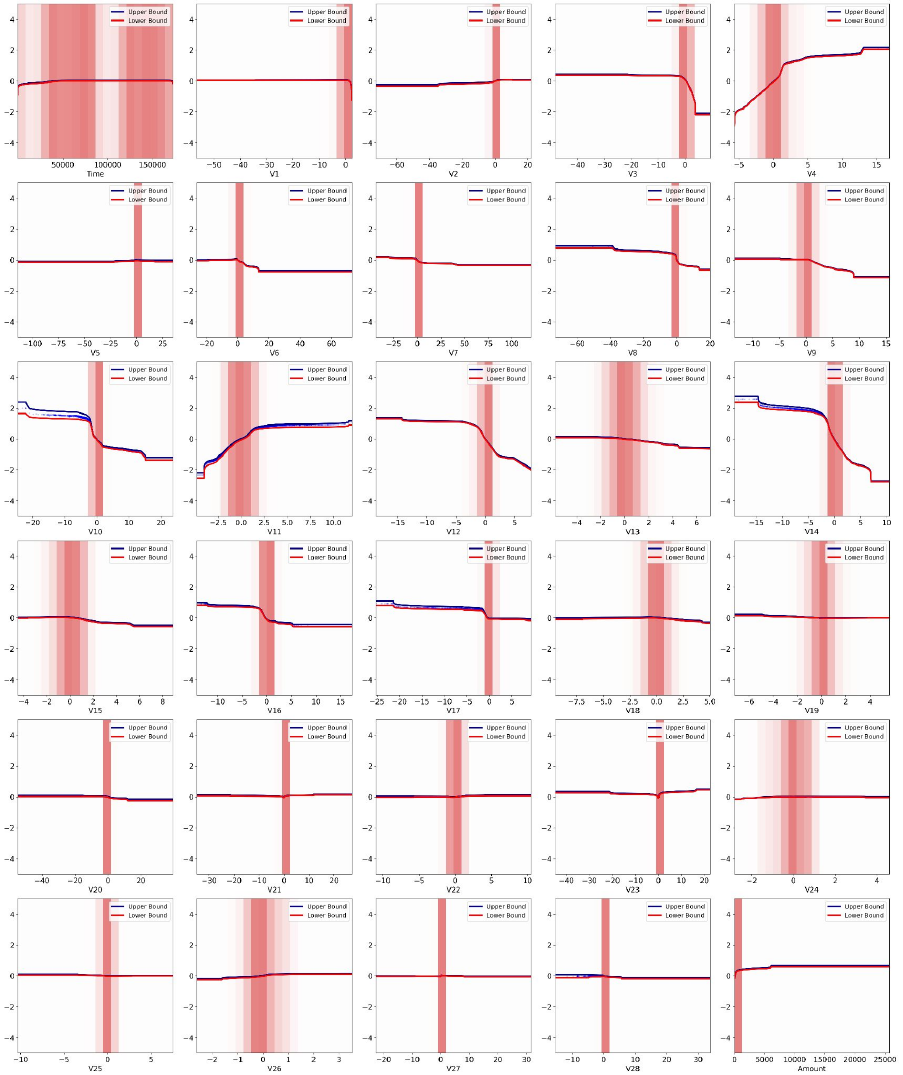}
    \caption{Visualization of feature explanations by NAE on the Credit dataset.}
    \label{fig:credit_all}
\end{figure*}

\end{document}